\documentclass[letterpaper, 10 pt, journal, twoside]{IEEEtran}

\usepackage{comment}
\usepackage{amsmath}

\usepackage{amsthm}

\usepackage{graphicx}
\usepackage{rotating}
\usepackage{float}
\usepackage{mathrsfs}
\usepackage{amssymb}
\usepackage{autobreak}
\usepackage{mathtools}
\usepackage{wrapfig}
\usepackage[dvipsnames]{xcolor}
\usepackage{bbm}
\usepackage{algorithm}
\usepackage{graphicx, subcaption}
\usepackage{makecell}
\usepackage{booktabs}

\usepackage[]{algpseudocode}

\usepackage{lipsum}
\usepackage{outlines}
\usepackage{blindtext}
\usepackage{multicol}
\usepackage{tikz}
\usetikzlibrary{shapes,backgrounds}
\usepackage{array}
\usepackage{wrapfig}
\usepackage{thmtools} 
\usepackage{thm-restate}

\captionsetup{font=footnotesize}

\usepackage{epsfig} 
\usepackage{amsmath} 
\usepackage{algorithm}
\usepackage[]{algpseudocode}

\DeclareMathOperator*{\argmax}{arg\,max}
\DeclareMathOperator*{\argmin}{arg\,min}


\newcommand{\betat}[1][]{\beta_{#1}}

\newcommand{\dyn}{\boldsymbol{f}}
\newcommand{\sempc}{sage}

\newcommand{\btheta}{\boldsymbol{\theta}}
\newcommand{\n}{n}
\newcommand{\m}{m}
\newcommand{\bhtheta}{\boldsymbol{\hat{\theta}}}

\newcommand{\autotuner}{\textsc{\small{COAt-MPC}}\xspace}
\newcommand{\safeopt}{\textsc{\small{SafeOpt}}\xspace}
\newcommand{\gpucb}{\textsc{\small{GP-UCB}}\xspace}
\newcommand{\wml}{\textsc{\small{WML}}\xspace}
\newcommand{\crbo}{\textsc{\small{CRBO}}\xspace}
\newcommand{\eic}{\textsc{\small{EI\textsubscript{C}}}\xspace}

\newcommand{\MPC}{\textsc{\small{MPC}}\xspace}

\newcommand{\goose}{\textsc{\small{GoOSE}}\xspace}

\newcommand{\ucb}{\textsc{\small{UCB}}\xspace}

\newcommand{\mypar}[1]{\noindent\textbf{#1}.}

\newcommand*{\defeq}{\mathrel{\vcenter{\baselineskip0.5ex \lineskiplimit0pt
                     \hbox{\scriptsize.}\hbox{\scriptsize.}}}%
                     =}



\newcommand{\bx}{\boldsymbol{x}}

\newcommand{\R}{\mathbb{R}}




%
\newtheorem{lemma}{Lemma}
\newtheorem{assumption}{Assumption}

\newtheorem{theorem*}{Theorem}
\newtheorem{corollary}{Corollary}





\newcommand{\Domain}{D}

\newcommand{\constrain}{q}

\newcommand{\noise}{\eta}

\newcommand{\OptiOperReach}[2][]{O^{#1}_n(#2)}
\newcommand{\optiOper}[2][]{o^{#1}_{#2}}

\newcommand{\tilOptiOper}[2][]{\tilde{O}^{#1}_n(#2)}

\newcommand{\PessiOperReach}[2][]{P^{#1}_n(#2)}
\newcommand{\pessiOper}[2][]{p^{#1}_{#2}}
\newcommand{\tilPessiOper}[2][]{\tilde{P}^{#1}_n(#2)}

\newcommand{\reachOper}[2][]{r^{#1}_{#2}}
\newcommand{\tilReachOper}[2][]{\tilde{R}^{#1}(#2)}
\newcommand{\ReachOperReach}[2][]{R^{#1}(#2)}

\newcommand{\ubconst}[1][]{u_{#1}}
\newcommand{\lbconst}[1][]{l_{#1}}

\newcommand{\epsconst}{\epsilon}

\newcommand{\noiseconst}{\sigma^{-2}_{\constrain}}

\newcommand{\betaconst}[1][]{\beta_{#1}}

\newcommand{\pessiSet}[2][]{\mathcal{S}_{#2}^{ p #1}}

\newcommand{\optiSet}[2][]{\mathcal{S}_{#2}^{ o, \epsconst #1}}
\newcommand{\constSet}[2][]{\mathcal{S}_{#2}^{ \constrain #1}}

\newcommand{\sumMaxwidth}[2][]{w^{#1}_{#2}}




\newcommand{\safeset}{\mathcal{S}}

\newcommand{\bu}{\boldsymbol{u}}
\newcommand{\bbigq}{\boldsymbol{Q}}
\newcommand{\bbigr}{\boldsymbol{R}}
\newcommand{\bxhat}{\boldsymbol{\hat{x}}}

\newcommand{\xset}{\mathcal{X}}
\newcommand{\uset}{\mathcal{U}}

\newcommand{\noisew}{\sigma_{\noise}}
\newcommand{\bk}{\boldsymbol{k}}
\newcommand{\bbigk}{\boldsymbol{K}}
\newcommand{\identity}{\boldsymbol{I}}
\newcommand{\by}{\boldsymbol{y}}

\newcommand{\seed}{\mathcal{S}_0}
\newcommand{\goal}{\btheta^g}

\makeatletter
\let\NAT@parse\undefined
\makeatother
\definecolor{lightblue}{rgb}{0.12,0.49,0.85}
\usepackage[colorlinks=true,linkcolor=black,citecolor=blue,urlcolor=blue]{hyperref}
\usepackage[capitalise]{cleveref}
\crefname{assumption}{Assumption}{Assumptions}
\begin{document}

\title{Performance-driven Constrained Optimal Auto-Tuner for MPC}

\author{Albert Gassol Puigjaner, Manish Prajapat, Andrea Carron, Andreas Krause, Melanie N. Zeilinger%
\thanks{All authors are with ETH Zurich. {\tt\small\{agassol, manishp, carrona, krausea, mzeilinger\}@ethz.ch}}
}

\maketitle

\begin{abstract}
A key challenge in tuning Model Predictive Control (\MPC) cost function parameters is to ensure that the system performance stays consistently above a certain threshold. To address this challenge, we propose a novel method, \autotuner, Constrained Optimal Auto-Tuner for \MPC. With every tuning iteration, \autotuner gathers performance data and learns by updating its posterior belief.  It explores the tuning parameters' domain towards optimistic parameters in a goal-directed fashion, which is key to its sample efficiency.  We theoretically analyze \autotuner, showing that it satisfies performance constraints with arbitrarily high probability at all times and provably converges to the optimum performance within finite time. Through comprehensive simulations and comparative analyses with a hardware platform, we demonstrate the effectiveness of \autotuner in comparison to classical Bayesian Optimization (BO) and other state-of-the-art methods. When applied to autonomous racing, our approach outperforms baselines in terms of constraint violations and cumulative regret over time.
\end{abstract}

\vspace{0.1cm}
\noindent {\small {\bf \autotuner Code}: \href{https://github.com/albertgassol1/coatmpc}{https://github.com/albertgassol1/coat\_mpc}}

\noindent {\small {\bf CRS Code}: \href{https://gitlab.ethz.ch/ics/crs}{https://gitlab.ethz.ch/ics/crs}}

\noindent {\small {\bf Video}: \href{https://youtu.be/Ep_BX3BDaeU?si=ShPcvWB_I8xCGg9T}{https://youtu.be/Ep\_BX3BDaeU?si=ShPcvWB\_I8xCGg9T}}

\setlength{\textfloatsep}{5pt}

\section{Introduction}
\label{chp:Introduction}

\looseness -1 Model Predictive Control (\MPC) is a prominent optimization-based control framework that can handle constraints and optimize system performance by predicting the system's future behavior. \MPC is widely used in many robotic applications such as autonomous driving~\cite{Kabzan2019AMZDT}, four-legged robots~\cite{9981945}, and bipedal robots~\cite{Kuindersma2015OptimizationbasedLP}. While \MPC is a successful optimal control technique, one of the significant challenges in its implementation is tuning the cost function parameters. Designing a cost function that balances competing objectives is a non-trivial task that requires significant trial and error. Moreover, the cost function parameters often depend on the specific environment and system dynamics, making it difficult to design a single set of parameters that can perform well in all scenarios. Usually, the task of fine-tuning cost function parameters involves heuristic methods and demands expert knowledge, leading to a significant number of costly and time-consuming experimental iterations. 

\looseness -1 In most applications, we tune to maximize some performance function, e.g., while tuning for racing, we optimize the lap time. However, these performance functions are often \emph{a-priori} unknown and need to be learned through data.  Naively, to find the optimal parameters, one may try out a large set of parameters in a grid search approach.  Apart from inefficiency caused by executing a large number of parameters, many of those parameters may lead the system to halt, i.e., low performance, resulting in a wasteful evaluation. For example, in tuning an \MPC for autonomous racing, it is undesirable to use parameters that make the car move extremely slow, or even stop before finishing a lap. This motivates a tuning process ensuring system performance above a threshold.

\setlength{\textfloatsep}{5pt}
\begin{figure}[t]
        \centering
        \includegraphics[width=\columnwidth]{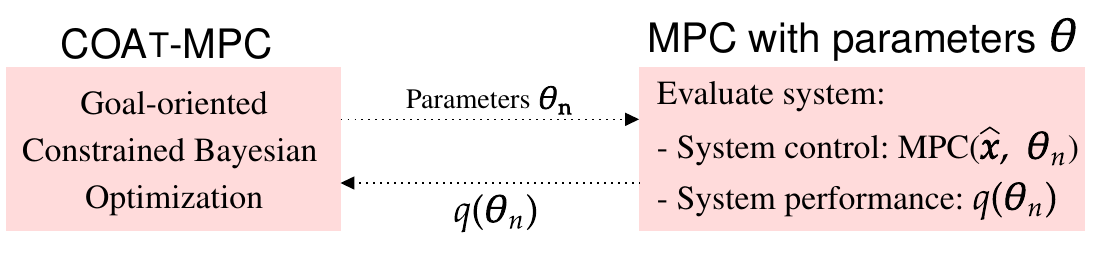}
        \caption{\looseness -1 \autotuner overview. \autotuner proposes a set of cost function weights $\btheta_\n$, which are evaluated on the system. It gets a performance function sample, which is used to update the posterior belief and acquire a new set of cost function weights. The process is repeated until convergence.}
        \label{fig:cbo_diagram}
\end{figure}

\looseness -1 To tackle these challenges, we propose a novel algorithm: \autotuner, Constrained Optimal Auto-Tuner for \MPC. \autotuner explores the space of MPC cost function parameters and builds a belief about the \emph{a-priori} unknown performance function through data, utilizing tools from Gaussian processes \cite{Rasmussen2003GaussianPF}. \autotuner incorporates safe exploration ideas from \cite{safeopt_2,goose,prajapat2024safe} and recursively recommends sufficiently informative parameters that ensure exploration while satisfying the performance constraint.  We establish convergence guarantees to the optimal tuning parameters in a finite number of samples while ensuring performance constraint satisfaction with an arbitrarily high probability. For finite time convergence, we present a sample complexity bound by leveraging the bound from~\cite{prajapat2024safe} for continuous domains and demonstrate its applicability with discrete set operators required for discrete domains. In particular, our sample complexity result removes an explicit dependence on the discretization step size and thus significantly improves prior safe exploration results in discrete domains~\cite{goose,safeopt_15,prajapat2022near}.

\looseness -1Finally, we demonstrate the effectiveness of \autotuner in the application of autonomous racing. We tune a Model Predictive Contouring Control~\cite{Liniger2015OptimizationbasedAR} formulation with the objective of optimizing the lap time while avoiding undesirable effects such as halting. Our evaluation includes a comprehensive analysis in both simulation and in experiments on a 1:28 scale RC racecar~\cite{Carron2022ChronosAC}. We present a comparative analysis against competitive baselines. The results demonstrate that our approach outperforms other methods in terms of the number of constraint violations and yields an improved cumulative regret. \vspace{-1em}


\section{Related works}
\label{chp:Related_works}

\looseness -1 Controller tuning in robotics has been an active area of research. Recently, data-driven methods aimed at learning the relationship between system parameters and a desired metric have emerged as promising solutions for automatic tuning.  Methods such as the Metropolis-Hastings algorithm~\cite{9696365} and Policy Search methods~\cite{Romero2022WeightedML}, have demonstrated state-of-the-art results in model-based agile flight control.  

Bayesian Optimization (BO)~\cite {Srinivas2009GaussianPO, Bo_tutorial} has been particularly successful due to its ability to optimize the objective function using a limited number of samples~\cite{Mockus1989BayesianAT}. It utilizes a probabilistic model to represent the unknown objective function, which is updated as new data is acquired. Even though any probabilistic model can be used, a popular choice to model unknown functions is a Gaussian Process (GP)~\cite{Rasmussen2003GaussianPF}, e.g., in BO ~\cite{safeopt_15,Frhlich2021ModelLA,pmlr-v144-frohlich21a,eriksson2019scalable,goose,menn2024lipschitz}, MPC~\cite{prajapat2024safe} or experiment design~\cite{prajapat2023submodular}. In the context of MPC parameter tuning, BO can be used to find the optimal cost function parameters for a given platform and environment~\cite{search_algos} by selecting an adequate objective function, i.e., laptime in the case of MPC tuning for racing applications. Additionally, subsequent works have introduced contextual information from the environment or system~\cite{Frhlich2021ModelLA}, considered the confidence of the probabilistic model to enhance the cautiousness and convergence rate of BO~\cite{pmlr-v144-frohlich21a}, and combined BO with trust region optimization~\cite{eriksson2019scalable}. However, it is worth noting that these BO methods do not take into account constraints on the objective function, which can lead to parameters that produce very poor performance due to their unbounded exploration. 

Several approaches have emerged to incorporate constraints into BO. One such approach involves utilizing a variant of the Expected Improvement (EI) function, referred to as the Constrained Expected Improvement (EI\textsubscript{C})~\cite{Gardner2014BayesianOW, Wilson2018MaximizingAF,SOROURIFAR2021243}. This approach involves modeling the constraint function with a prior distribution and incorporating a probability of violation into the acquisition function. However, none of these methods provide a theoretical guarantee of constraint satisfaction.

In the literature of constrained Bayesian Optimization, \textsc{Safe-Opt}~\cite{safeopt_15, safeopt_2, safeopt_3} is introduced as a method that aims to provide high-probability guarantees of constraint satisfaction. The algorithm leverages the regularity assumption on the objective function and the Lipschitz continuity to identify a set of parameters where the constraints on the underlying objective function are unlikely to be violated. Even though \textsc{Safe-Opt} has been proven to guarantee safety, it tends to explore the complete safe parameter region, leading to sample inefficiency in relation to the optimization task. A concurrent work~\cite{menn2024lipschitz} has explored constrained optimization for controller tuning, such as PID controllers in automotive applications, using a Lipschitz-only assumption approach.

In order to tackle sample inefficiency in safe exploration, the authors of~\cite{goose} propose \goose, a goal-oriented safe exploration algorithm for any interactive machine learning method. \goose leverages the regularity assumption on the constraint function to define over- and under-approximations of the safe set. A goal within the over-approximated set is defined at each iteration with the purpose of steering the recommendations of \goose towards the goal while ensuring safety. However, the sample complexity analysis of \goose is explicitly dependent on its discretization step size, leading to poor scalability  (in terms of finer discretization).

\looseness -1 Previous works have several limitations: they either do not incorporate constraints in the optimization~\cite{Srinivas2009GaussianPO,search_algos,Frhlich2021ModelLA,pmlr-v144-frohlich21a,eriksson2019scalable}, lack theoretical guarantees on constraint satisfaction~\cite{Gardner2014BayesianOW,Wilson2018MaximizingAF,SOROURIFAR2021243}, suffer from sample inefficiency~\cite{safeopt_15,safeopt_2,safeopt_3}, or provide poorly scalable theoretical sample complexity bounds~\cite{goose}. In this paper, we present a sample-efficient algorithm that satisfies performance constraints and offers scalable theoretical guarantees for \MPC cost function tuning.

\section{Problem statement} 
\label{chp:Problem_statement}
\looseness -1 We consider a non-linear dynamic system controlled using an \MPC with cost function parameters $\btheta \in \R^{N_{\theta}}$.

\begin{equation}
\begin{aligned}
     \min_{\bu_{\mathrm{0:N}}} \quad &  \sum_{i=0}^{N}{l(\bx_i, \bu_i, \btheta)} \\
    \textrm{s.t.} \quad & \bx_0 = \bxhat(t) , \, \bx_{i+1} = \dyn(\bx_i, \bu_i)\\
    &\bx_i \in \xset , \, \bu_i \in \uset, \forall i=0,\cdots, N\, ,
\label{eq:MPCFormulation}
\end{aligned}
\end{equation}

\noindent where \mbox{$\bx_i \in \R^{N_x}$} is the system state with dimension \mbox{$N_x$}, \mbox{$\bu_i \in \R^{N_u}$} is the control input with dimension $N_u$, \mbox{$\dyn(\bx_i, \bu_i):~\R^{N_x}\times \R^{N_u} \to \R^{N_x}$} denotes the system dynamics, \mbox{$l(\bx_i, \bu_i, \btheta):~\R^{N_x}\times \R^{N_u} \times \R^{ N_{\theta}} \to \R$} is the cost function and \mbox{$\bxhat(t) \in \R^{N_x}$} is the system state at time $t$.

We define a performance function \mbox{$\constrain\,:\,\Domain \rightarrow \R$}, where \mbox{$\Domain \subseteq \R^{N_{\theta}}$} is a finite domain of cost function parameters, that measures the performance of a given set of tuning parameters. The function $q(\btheta)$ is \emph{a-priori} unknown and needs to be learned with data. To learn the performance function, at any iteration $\n$, one can control the system with an \MPC using any parameter \mbox{$\btheta_\n \in \Domain$} and obtain a noisy observation of \mbox{$\constrain(\btheta_\n)$}. We examine the problem of finding the parameters that maximize $\constrain$ while ensuring that the performance is above a user-specified threshold \mbox{$\tau \in \R$} (e.g., an arbitrary upper bound lap time for racing applications) in all iterations, i.e.,
\begin{align}
    q(\btheta_\n) \geq \tau, \forall \n \geq 1.
    \label{eq:constraint}
\end{align}
 Ideally, we do not want to execute all parameters, but only those that are essential to guarantee convergence to optimal parameters while always satisfying the constraint. 

We next make an initialization assumption which is crucial to start the tuning process.
\begin{assumption}[Initial seed]
\label{assump:safe_seed}
    An initial set of parameters \mbox{$\safeset_0 \subseteq \Domain$} that satisfy the performance constraint is known, i.e., \mbox{$\forall \btheta\in \safeset_0, \constrain(\btheta)\geq \tau$}. 
\end{assumption} 
For a suitable $\tau$, this assumption can be satisfied by employing an MPC capable of controlling the system to obtain measurements of the performance function $\constrain$.  For instance, in autonomous racing, employing parameters of an MPC (which need not be optimized) capable of leading the car to complete the lap will meet the criteria outlined in~\cref{assump:safe_seed}.

Using ~\Cref{assump:safe_seed}, we construct a reachable set of cost function parameters, $\constSet[,\epsilon]{}$, which contains all the parameters that can be reached starting from the initial set $\safeset_0$, while always satisfying the performance constraint up to a statistical confidence of $\epsilon$-margin, i.e, \mbox{$\constrain(\btheta) -\epsilon \geq \tau$} (see~\cref{sec:goal-oriented-se} for details on how to construct this set).

\looseness -1 \mypar{\autotuner objective} Given the noisy measurements of the performance function, the best any algorithm can guarantee is convergence to parameters $\goal$ satisfying
\begin{equation}
    \begin{aligned}
    \constrain(\goal) \geq \max_{\btheta \in \constSet[,\epsconst]{}} \quad & \constrain(\btheta) - \epsilon
    \label{eq:HighLevelCBO}
    \end{aligned}
\end{equation}
in a finite number of tuning iterations while ensuring~\cref{eq:constraint}, where $\epsilon$ controls the tolerance between the converged and optimal parameters.

\section{Background}
\label{chp:background}
\looseness -1 In this section, we first introduce Gaussian Processes in~\cref{sec:GP}, which are used to model the unknown function $\constrain$, and utilize them to explain concepts of safe exploration relevant for \autotuner in~\cref{sec:goal-oriented-se}.

\subsection{Gaussian processes} \label{sec:GP}
\looseness -1

The performance function $\constrain$ is \emph{a-priori} unknown. Therefore, to explore the parameter space while satisfying the performance constraint, we need a mechanism that ensures that knowing about $\constrain$ at a certain $\btheta$ provides us with some information about the neighboring region. To this end, we make the following regularity assumptions on the performance function $\constrain$.
\begin{assumption} 
    The domain $\Domain$ is endowed with a positive definite kernel $ k_{\constrain}(\cdot, \cdot)$, and $\constrain$ has a bounded norm in the associated Reproducing Kernel Hilbert Space (RKHS)~\cite{kernels}, $||\constrain||_k \leq B_{\constrain} < \infty$.
\label{assump:q_RKHS}
\end{assumption}
This assumption allows us to model the performance function $\constrain$ using a Gaussian Process~\cite{Rasmussen2003GaussianPF}. GPs are probability distributions over a class of continuous smooth functions. GPs are characterized by a mean \mbox{$\mu:\R^{N_{\theta}} \to \R$} and a kernel function \mbox{$k:\R^{N_{\theta}}\times\R^{N_{\theta}} \to \R$}, which captures the notion of similarity between data points. Without loss of generality, we normalize such that \mbox{$k(\btheta,\btheta') \leq 1, \forall \btheta,\btheta' \in \R^{N_{\theta}}$}. Given a set of $\n$ noisy samples collected at \mbox{$A_{\n} = \{\btheta_i\}_{i=1}^n$}, perturbed by \mbox{$\eta_{\n}$} conditionally $\sigma_\eta$-sub-Gaussian noise, given by \mbox{$\by_{\n} = [\constrain(\btheta_1) + \eta_1, \hdots, \constrain(\btheta_{\n}) + \eta_{\n}]^\top$}, we can compute the posterior over $\constrain$ in closed form using,
\begin{equation}    
\begin{aligned}
    \mu_{\n}(\btheta) &= \bk_{\n}^{\top}(\btheta)(\bbigk_{\n} + \identity_{\n} \noisew^2)^{-1}\by_{\n},
      \\ 
    k_{\n}(\btheta, \btheta') &= k_{\n}(\btheta, \btheta') - \bk_{\n}^\top(\btheta)(\bbigk_{\n} + \identity_{\n} \noisew^2)^{-1}\bk_{\n}(\btheta'), \\
    \sigma_{\n}(\btheta) &= \sqrt{k_{\n}(\btheta,\btheta)}, \label{eq:posterior_update}
\end{aligned}
\end{equation}

\noindent where the covariance matrix $\bbigk_{\n}$ is defined as  \mbox{$\bbigk_{\n}(i,j) = k_{\n}(\btheta_i, \btheta_j), \,  i, j \in \{1, \hdots, \n\}$}, and \mbox{$\bk_{\n}(\btheta) = [k_{\n}(\btheta_1, \btheta), \hdots, k_{\n}(\btheta_{\n}, \btheta)]^\top$} and \mbox{$\sigma_{\n}:\R^{N_{\theta}} \to \R$} denotes the predictive variance. 

\cref{assump:q_RKHS} is a typical assumption in prior works that use GPs to model unknown functions~\cite{Frhlich2021ModelLA, goose, safeopt_15}.
We consider that the performance function $\constrain$ is $L$-Lipschitz continuous with respect to some metric $d$ on $\Domain$, e.g., the Euclidean metric. This is automatically satisfied when using common isotropic kernels, such as the Mátern and Gaussian kernels. Additionally, we define the maximum \textit{information capacity} \mbox{$\gamma_{\n} \defeq \sup_{ A \subseteq \Domain \, : \, |A|\leq n} I(\by_A; \constrain_A)$} associated with the kernel $k$, where $I(\by_A; \constrain_A)$ denotes the mutual information between $\constrain$ evaluated at locations in the set $A$ and the noisy samples $\by_A$ collected at $A$~\cite{Srinivas2009GaussianPO}. This definition lets us build upon the finite time convergence of \autotuner (\cref{chp:Theoretical_results}).

\subsection{Safe exploration}
\label{sec:goal-oriented-se}

\looseness -1 In this section, we introduce the necessary tools from prior works~\cite{goose,safeopt_15} required to explore the domain of cost function weights efficiently while ensuring~\cref{eq:constraint}. 

\mypar{Optimistic, pessimistic and reachable sets}  Utilizing the GP posterior~\cref{eq:posterior_update}, we construct intersecting lower and upper confidence bounds on $\constrain$ at each iteration $\n\geq1$ as:
\begin{equation}
\begin{aligned}
    l_{\n}(\btheta) &:= \max{(l_{\n-1}(\btheta), \mu_{\n-1}(\btheta) - \sqrt{\beta_{\n}}\sigma_{\n-1}(\btheta))}, \\
    u_{\n}(\btheta) &:= \min{(u_{\n-1}(\btheta), \mu_{\n-1}(\btheta) + \sqrt{\beta_{\n}}\sigma_{\n-1}(\btheta))},
    \label{eq:conf_bounds}
\end{aligned}
\end{equation}

\noindent initialized with \mbox{$l_{0}(\btheta) = \mu_{0}(\btheta) - \sqrt{\beta_{1}}\sigma_{0}(\btheta)$} and \mbox{$u_{0}(\btheta) = \mu_{0}(\btheta) + \sqrt{\beta_{1}}\sigma_{0}(\btheta)$}, where $\beta_n$ represents an appropriate scaling factor specified in~\cref{cor:beta}. Note that $l_{\n}(\cdot)$ is non-decreasing and $u_{\n}(\cdot)$ is non-increasing in $\n$, i.e., 
$$l_{\n+1}(\btheta) \geq l_{\n}(\btheta), u_{\n+1}(\btheta) \leq u_{\n}(\btheta), \forall \btheta \in \Domain,$$
directly by construction using intersecting confidence bounds in~\cref{eq:conf_bounds}. Using this and the GPs error bounds from Theorem 2 of~\cite{pmlr-v70-chowdhury17a}, we get the following corollary~\cite{safeopt_3}:

\begin{corollary}[Theorem 2~\cite{pmlr-v70-chowdhury17a}]
\label{cor:beta}
Let~\cref{assump:q_RKHS} hold. If \mbox{$\sqrt{\beta_{\n}} = B + 4 \sigma \sqrt{\gamma_{\n} + 1+ \ln(1/\delta)}$}, it holds that \mbox{$l_{\n}(\btheta) \leq \constrain(\btheta) \leq u_{\n}(\btheta), \forall\btheta\in\R^{N_{\theta}}$} with probability at least $1-\delta$. 
\end{corollary}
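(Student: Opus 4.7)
The plan is to reduce the statement directly to Theorem~2 of Chowdhury and Gopalan. Under \cref{assump:q_RKHS}, that theorem guarantees that with probability at least $1-\delta$, simultaneously for every iteration $\n \geq 1$ and every $\btheta \in \Domain$, the raw GP confidence bound
\[
\mu_{\n-1}(\btheta) - \sqrt{\beta_\n}\,\sigma_{\n-1}(\btheta) \;\leq\; \constrain(\btheta) \;\leq\; \mu_{\n-1}(\btheta) + \sqrt{\beta_\n}\,\sigma_{\n-1}(\btheta)
\]
holds with the advertised choice $\sqrt{\beta_\n} = B + 4\sigma\sqrt{\gamma_\n + 1 + \ln(1/\delta)}$. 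Importantly, this event is already uniform in $\n$ and $\btheta$ because the information-gain factor $\gamma_\n$ is absorbed into $\beta_\n$, so no extra union bound over iterations is required. I would name this single high-probability event $E$ and condition on it for the rest of the argument.

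The second step is a short induction on $\n$, purely deterministic under $E$, showing that the monotone intersection-based bounds in \cref{eq:conf_bounds} still sandwich $\constrain$. The base case $\n=0$ follows because $l_0$ and $u_0$ are exactly the unintersected Chowdhury--Gopalan bounds at index $1$, so event $E$ supplies the inequality directly. For the inductive step, assume $l_{\n-1}(\btheta) \leq \constrain(\btheta) \leq u_{\n-1}(\btheta)$. Event $E$ also gives $\mu_{\n-1}(\btheta) - \sqrt{\beta_\n}\,\sigma_{\n-1}(\btheta) \leq \constrain(\btheta)$, so the maximum appearing in the definition of $l_\n(\btheta)$ is a maximum of two numbers each not exceeding $\constrain(\btheta)$; hence $l_\n(\btheta) \leq \constrain(\btheta)$. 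The upper-bound side is symmetric via the min.

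The only conceptual hazard, and the reason I would lead with invoking the Chowdhury--Gopalan bound rather than hand-crafting one, is precisely the uniformity in $\n$: without a bound that already holds for all iterations at once, a naive per-step union bound would force $\delta$ to shrink with $\n$. Once this is absorbed into the form of $\beta_\n$, the intersection construction is essentially for free, since at every iteration it can only tighten the interval around $\constrain(\btheta)$ without ever cutting $\constrain(\btheta)$ out of the containment. I expect no further subtlety, and in particular the Lipschitz continuity and the finiteness of $\Domain$ mentioned in the background are not needed for this corollary itself, only for later reachable-set arguments.
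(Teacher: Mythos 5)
Your proposal is correct and matches the paper's reasoning: the paper simply invokes Theorem 2 of Chowdhury and Gopalan for the anytime, uniform-in-$\n$ confidence bounds with the stated $\sqrt{\beta_\n}$, and notes that the intersected bounds $l_\n, u_\n$ of \cref{eq:conf_bounds} can only tighten the interval, exactly as in your induction step. No gap; your observation that neither Lipschitz continuity nor finiteness of $\Domain$ is needed here is also consistent with the paper.
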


Throughout this work, we use $\sqrt{\betaconst[\n]}$ from ~\cref{cor:beta}. Similar to \goose~\cite{goose}, we define a one-step reachability operator exploiting the $L$-Lipschitz continuity of $\constrain$, and build pessimistic and optimistic constraint operators over it using the derived lower and upper confidence bounds.
\begin{align*}
   \reachOper[\epsilon]{}(\safeset) &= \{\btheta \in \Domain \,|\, \exists \btheta' \in \safeset : \constrain(\btheta') - Ld(\btheta, \btheta') - \epsilon \geq \tau \}\\
    \pessiOper[]{\n}(\safeset) &= \{ \btheta \in \Domain \,|\, \exists \btheta' \in \safeset : l_{\n}(\btheta') - Ld(\btheta, \btheta') \geq \tau \} \\
   \optiOper[\epsilon]{\n}(\safeset) &= \{ \btheta \in \Domain \,|\, \exists \btheta' \in \safeset : u_{\n}(\btheta') - Ld(\btheta, \btheta') - \epsilon \geq \tau \}.  \vspace{-1.5em}
\end{align*}\vspace{-1.5em}
\begin{figure}[t]
    \vspace{.2cm}
        \centering
        \includegraphics[width=1\columnwidth]{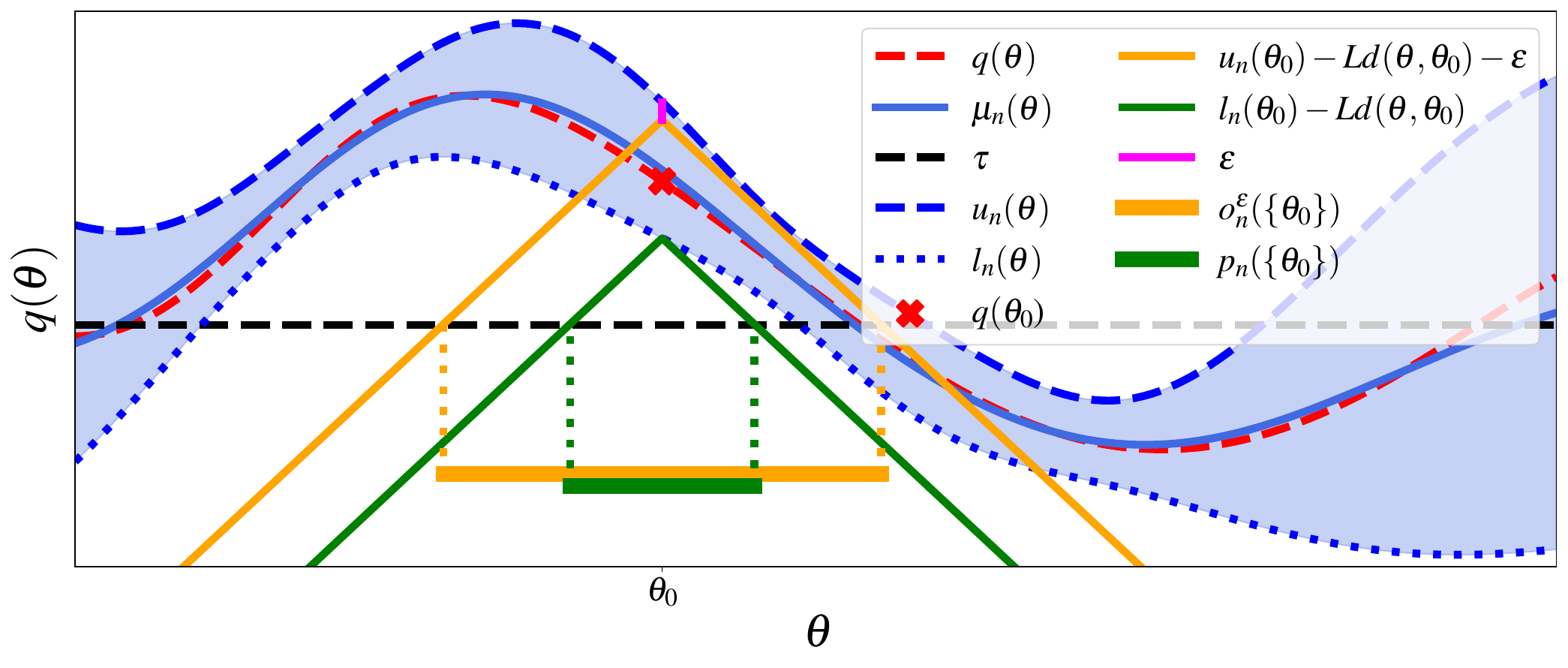}
        \caption{Pessimistic and optimistic operators evaluated at~$\theta_0$ (figure from~\cite{goose,prajapat2024safe}). This figure demonstrates how the pessimistic and optimistic sets are computed when only evaluated in a single point $\theta_0$. The operators make use of the GP upper and lower confidence bounds, as well as the $L$-Lipschitz continuity. In this example, $d(\theta, \theta_0)$ is the Euclidean distance function, where $\theta_0$ is fixed. Additionally, $\tau$ is set arbitrarily.}
        \label{fig:lipschitz}
            \vspace{-.2cm}
\end{figure}

\begin{figure*}[htp]
    \vspace{.2cm}
    \begin{subfigure}{.67\columnwidth}
        \centering
        \includegraphics[width=1\linewidth]{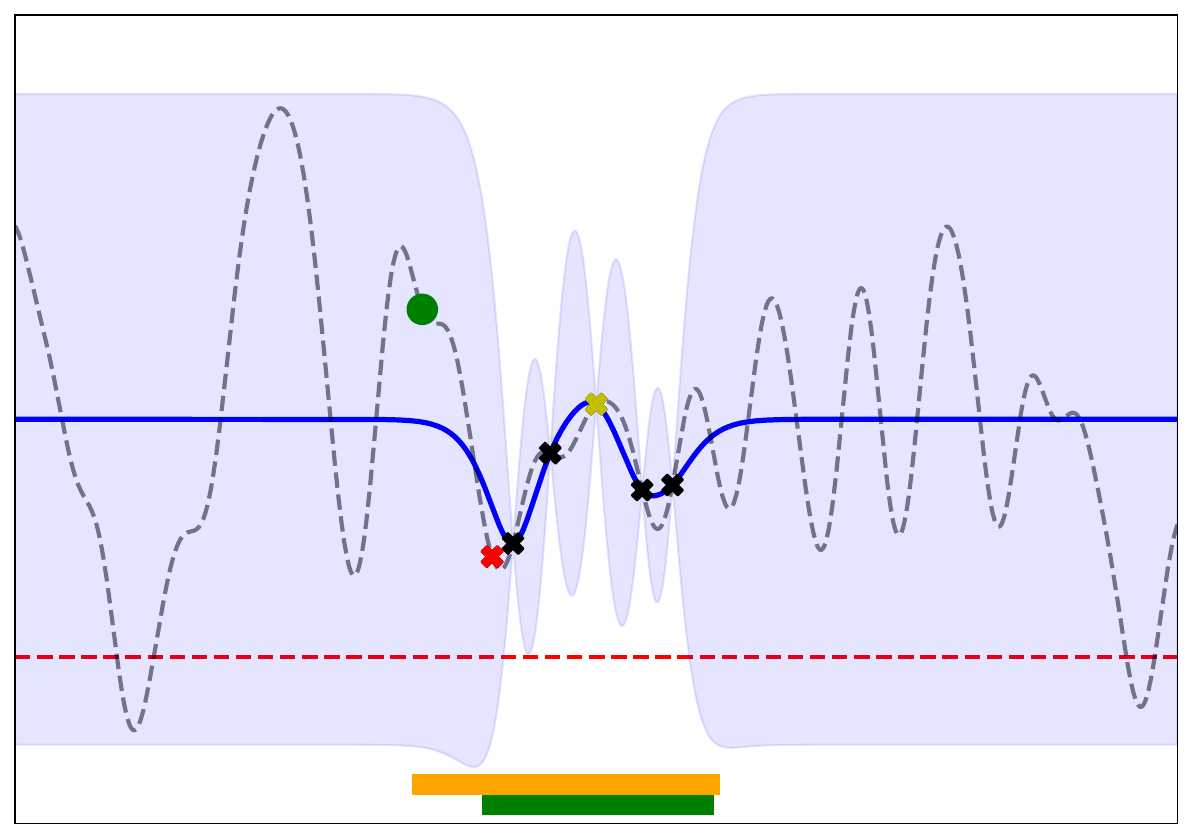}
        \caption{\autotuner at $\n=5$}
        \label{fig:sopt_1}
    \end{subfigure}
    \begin{subfigure}{.67\columnwidth}
        \centering
        \includegraphics[width=1\linewidth]{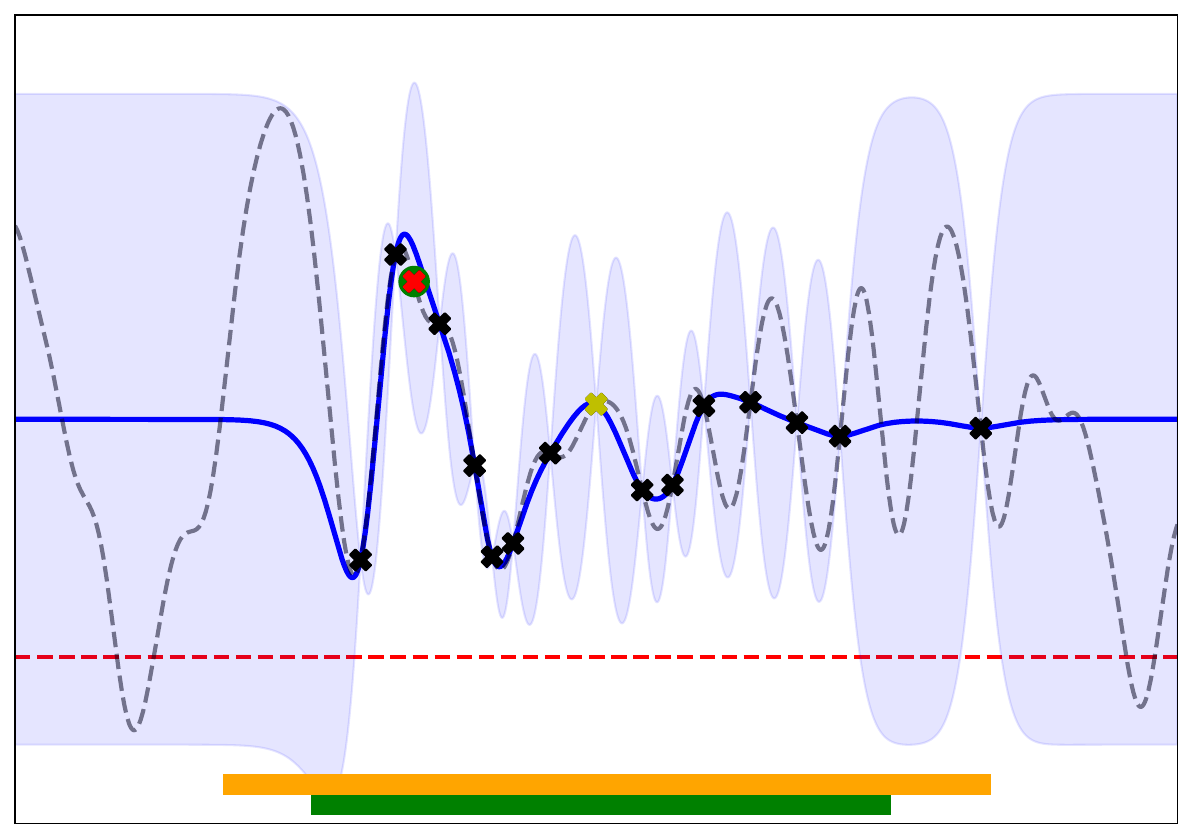}
        \caption{\autotuner at $\n=15$}
        \label{fig:crs_car}
    \end{subfigure}
    \begin{subfigure}{.67\columnwidth}
        \centering
        \includegraphics[width=1\linewidth]{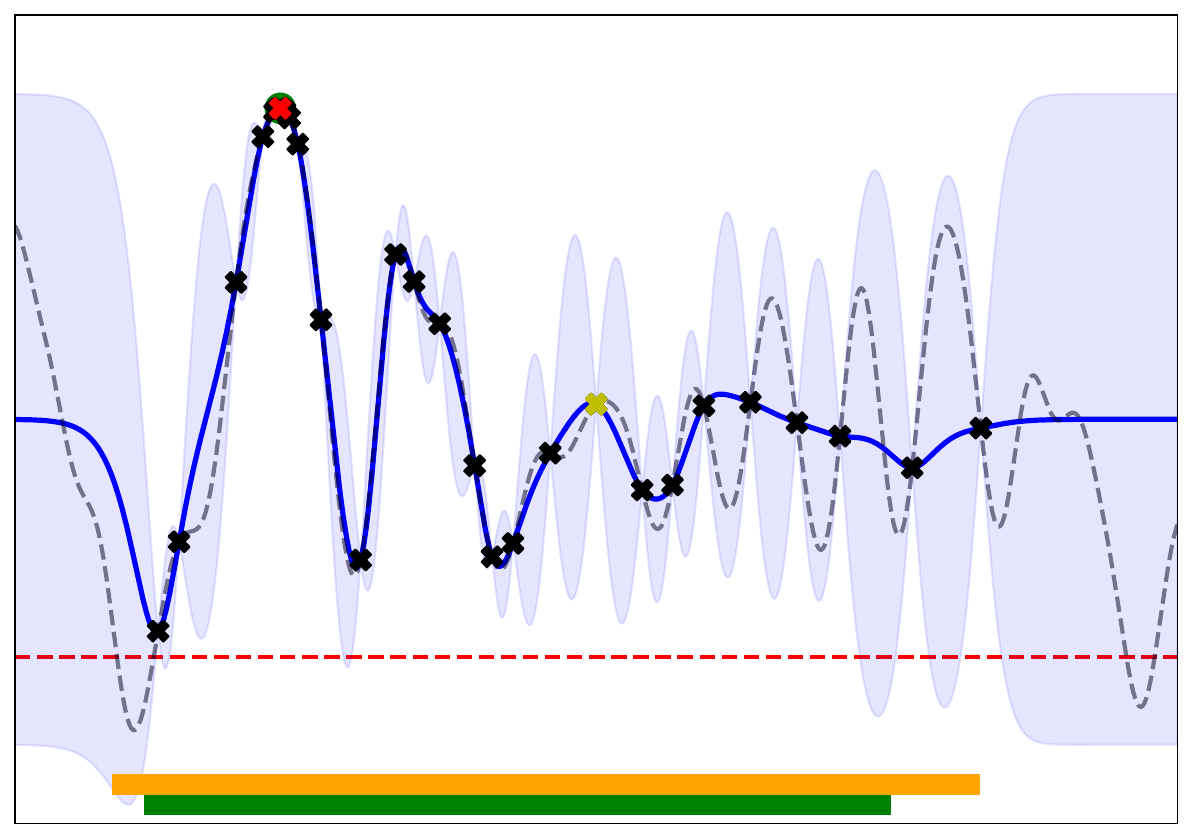}
        \caption{\autotuner at $\n=28$. Final iteration.}
        \label{fig:sopt_2}
    \end{subfigure}
    \caption{\autotuner illustration. (i) The grey, dashed line represents the true function. (ii) The red, dashed line represents the constraint. (iii) The blue line represents the Gaussian Process mean, and the shaded blue area represents the confidence bounds ($\mu_{\n}(\boldsymbol{\theta}) \pm \sqrt{\beta_{\n}} \sigma_{\n}(\boldsymbol{\theta})$).  (iv) The cross markers represent the samples, with yellow denoting the first sample and red the \autotuner recommended sample. (v) The green dot denotes the goal at each iteration.
    The algorithm learns the pessimistic (green bar) and optimistic (orange bar) sets, and explores the parameter space while satisfying the performance constraint. At $\n=5$, the goal is outside of the pessimistic set, although it is inside the optimistic set. \autotuner expands the pessimistic set by approaching the goal. It reaches the goal at $\n=15$, and by further expanding the sets, it discovers the maximum of the function at $\n=28$.}
    \vspace{-1.5em}
    \label{fig:coat_mpc_all}
\end{figure*}
\looseness -1 A visual representation of the pessimistic and optimistic operators evaluated at a single point is depicted in \cref{fig:lipschitz}. For notational convenience, we denote \mbox{$\reachOper[]{}(\safeset) \coloneqq \reachOper[0]{}(\safeset)$} when referring to $\epsilon=0$ case (analogously for the pessimistic and the optimistic operator as well). By applying these one-step constraint operators recursively, we next define the pessimistic, optimistic and reachability expansion operators, which are used to obtain the pessimistic and optimistic estimates of the true constraint set:
\begin{align}
&\tilReachOper[\epsilon]{\safeset} = \lim_{\m\rightarrow \infty} \ReachOperReach[\epsilon, \m]{\safeset},\label{eq:reach_oper}\\
&\tilPessiOper[]{\safeset} = \lim_{\m\rightarrow \infty} \PessiOperReach[\m]{\safeset},\label{eq:pessi_oper}\\  
&\tilOptiOper[\epsilon]{\safeset} = \lim_{\m\rightarrow \infty} \OptiOperReach[\epsilon, \m]{\safeset},\label{eq:opti_oper}  
\end{align} 
where \mbox{$\PessiOperReach[\m]{\safeset} := \pessiOper[]{\n}(\pessiOper[]{\n}\cdots(\pessiOper[]{\n}(\safeset)))$} and \mbox{$\OptiOperReach[\epsilon,\m]{\safeset} := \optiOper[\epsilon]{\n}(\optiOper[\epsilon]{\n}\cdots(\optiOper[\epsilon]{\n}(\safeset)))$} are the $\m$-step pessimistic and optimistic expansion operators. Using the expansion operators on $\pessiSet[]{\n-1}$ we obtain pessimistic \mbox{$\pessiSet[]{\n} = \tilPessiOper[]{\pessiSet[]{\n-1}}$} and optimistic \mbox{$\optiSet[]{\n} = \tilOptiOper[\epsilon]{\pessiSet[]{\n-1}}$} estimates of the true constraint set. 
Analogously, \mbox{$\ReachOperReach[\epsilon, \m]{\safeset} := \reachOper[\epsilon]{}(\reachOper[\epsilon]{}\cdots(\reachOper[\epsilon]{}(\safeset)))$} denotes the $\m$-step $\epsilon-$close true reachability operator. Applying the reachability operator from~\cref{eq:reach_oper} on the initial seed $\safeset_0$, we obtain the $\epsilon-$close true reachability set \mbox{$\constSet[,\epsconst]{} = \tilReachOper[\epsilon]{\safeset_0}$}, which includes all the parameters being at least $\epsconst-$conservative from violating the constraint.%


\section{\autotuner}
\label{chp:Algorithm}

In this section, we present \autotuner, for the optimization of MPC cost function parameters while respecting~\cref{eq:constraint}. The algorithm is introduced in Algorithm \ref{alg:safe_epsilon_ucb} with its optimality guarantees deferred to \cref{chp:Theoretical_results}. 

\vspace{.2cm}

\mypar{Intuition} 
Our algorithm's goal is (i) to ensure the satisfaction of~\cref{eq:constraint} while (ii) converging to the optimal cost function weights with few tuning iterations. For the former part (i), we sample weights from the pessimistic set $\pessiSet[]{\n}$, which guarantees satisfying~\cref{eq:constraint} with high probability. To ensure the later part (ii), we set a goal $\goal_{\n}$ in the optimistic set $\optiSet[]{\n}$, outside of which the weights do not satisfy~\cref{eq:constraint}. Additionally, to converge to the optimal weights with fewer tuning iterations, we employ a goal-directed approach and use an expansion method (see~\cref{alg:safe_expantion}) towards this goal. We showcase a visual representation of \autotuner in the 1D setting in~\cref{fig:coat_mpc_all}.

 \begin{algorithm}
    \caption{Constrained Expansion (CE)}\label{alg:safe_expantion}
    \begin{algorithmic}[1]
      \State \textbf{Input:} $\pessiSet[]{\n-1}, \goal_{\n}$
      \State \textbf{Recommend:} $\argmin_{\btheta \in \pessiSet[]{\n-1}}{||\goal_{\n} - \btheta||_2, \text{ s.t. } w_{\n}(\btheta) \geq \epsilon}$ \label{alg-se:line:1}
    \end{algorithmic}
  \end{algorithm}  
 
\mypar{Constrained Expansion} The objective is to learn about the satisfaction of~\cref{eq:constraint} of the current goal $\goal_{\n}$, given the pessimistic set $\pessiSet[]{\n}$. \autotuner's expansion strategy recommends the closest point to the goal, with respect to the Euclidean distance, inside the pessimistic set and that is not $\epsilon$-accurate, i.e., the width of the confidence bounds \mbox{$w_{\n}(\btheta) = u_{\n}(\btheta) - l_{n}(\btheta)$} evaluated at the point is greater than or equal to $\epsilon$. Thus, \autotuner recommends parameters that satisfy the constraint and that are as close as possible to the goal while maintaining exploration through the statistical confidence $\epsilon$. If the algorithm is certain enough about the performance $\constrain$ of a parameter, it will not further explore it.

  \begin{algorithm}\caption{\autotuner}\label{alg:safe_epsilon_ucb}
    \begin{algorithmic}[1]
      \State \textbf{Input:} Initial seed $\seed$, $\constrain \sim \mathcal{GP}(\mu_0(\btheta), k_0(\btheta, \btheta'))$, $\tau$, $\Domain$, Lipschitz constant $L$
      \State $\pessiSet[]{0} \leftarrow \seed$, $\optiSet[]{0} \leftarrow \Domain$ \label{alg:line:2}
      \For {$\n = 1, \hdots, N_{max}$,}
      \State  $\goal_{\n} \leftarrow \argmax_{\btheta \in \optiSet[]{\n-1}} u_{\n-1}(\btheta)$ \label{alg:line:4}
      \If{$\goal_{\n}\in \pessiSet[]{\n-1}$ and $w_{\n-1}(\goal_{\n}) < \epsilon$} \label{alg:line:5}
      \State Terminate \label{alg:line:6}
      \ElsIf{$\goal_{\n} \notin \pessiSet[]{\n-1}$}\label{alg:line:7}
      \State $\btheta_{\n} \gets \textsc{CE}(\pessiSet[]{\n-1}, \goal_{\n})$\label{alg:line:10}
      \EndIf
      \State $y_{\n} \gets \constrain(\btheta_{\n}) + \eta_{\n}$ and Update GP \label{alg:line:8}
      \State $\pessiSet[]{\n} \gets \tilPessiOper[]{\pessiSet[]{\n-1}}$ \label{alg:line:13}
      \State $\optiSet[]{\n} \gets \tilOptiOper[\epsilon]{\pessiSet[]{\n-1}}$ \label{alg:line:14}
      \EndFor
     \State \textbf{Recommend:} $\goal_{\n}$ 
    \end{algorithmic}
  \end{algorithm}
  
\mypar{\autotuner} The pseudocode is summarized in \cref{alg:safe_epsilon_ucb}. We start the algorithm from parameters within $\safeset_0$, where the constraint \mbox{$\constrain(\btheta) \geq \tau, \forall \btheta \in \safeset_0$} is known to be satisfied due to~\Cref{assump:safe_seed}. In~\Cref{alg:line:2}, we initialize the pessimistic set to the initial seed $\safeset_0$ and the optimistic set to the parameters' domain $\Domain$. We next compute the goal $\goal_{\n}$ within the optimistic set using the Upper Confidence Bounds (UCB) (\Cref{alg:line:4}). The objective is to reach the goal while ensuring that sufficient information is gained for exploration. This is measured using the width of the confidence bounds $w_{\n-1}(\btheta)$. 
Based on the location of the goal and the uncertainty about it,  \autotuner considers the following two cases:

\begin{enumerate}
    \item \textbf{The goal is in the pessimistic set with the uncertainty below} $\boldsymbol{\epsilon}$, i.e., $w_{n-1}(\goal_{\n}) < \epsilon$: This case represents that the goal satisfies~\cref{eq:constraint} and the performance function value is known up to the desired confidence. Since the goal defined using UCB criteria is an upper bound over the possible locations that fulfill the constraint, we terminate the algorithm (\cref{alg:line:6}) with desired guarantees (\cref{chp:Theoretical_results}).
    
    \item \textbf{The goal is not in the pessimistic set}: In order to expand the pessimistic set to the goal location we use \emph{constrained expansion} defined in~\cref{alg:safe_expantion} (~\Cref{alg:line:10}) and obtain a sampling location. Then we collect a noisy measurement of $\constrain$ and update the posterior (~\Cref{alg:line:8}).
\end{enumerate}
If none of the two cases holds,  the goal is within the pessimistic set but the uncertainty \mbox{$w_{n-1}(\goal_{\n}) \geq \epsilon$}. In this case,  we directly collect a noisy measurement of $\constrain$ to reduce the uncertainty, update the posterior and continue the process to get the new goal location (\Cref{alg:line:8}). Finally, if the algorithm has not terminated, the pessimistic and optimistic sets are updated using their expansion operators (Lines \ref{alg:line:13} and \ref{alg:line:14}). This process expands the potential set of parameters that fulfill~\cref{eq:constraint} and explores the parameter domain. Note that low values of $\epsilon$ lead to a slow convergence and increased exploration, while high values of $\epsilon$ have to opposite effect.

\section{Theoretical analysis}
\label{chp:Theoretical_results}

In this section, we present our core theoretical result, i.e., convergence to optimal parameters while satisfying~\cref{eq:constraint} in finite time with arbitrarily high probability. 
We start with the following assumption for the finite sample complexity.
\begin{assumption} \label{assump:sublinear}
    $\betaconst[\n]\!\gamma_{\n}$ grows sublinear in $\n$, i.e., \mbox{$\!\betaconst[\n]\gamma_{\n} \!<\! \!\mathcal{O}(n)$}. 
\end{assumption}

This assumption is common in most prior works \cite{prajapat2022near,prajapat2024safe,goose} aimed to establish sample complexity or sublinear regret results and are not restrictive. 
It can be satisfied for commonly used kernels, e.g., linear kernels, squared exponential, Mat\'ern, etc., with sufficient eigen decay \cite{vakili2021information,Srinivas2009GaussianPO} under the bounded $B_q$ of~\Cref{assump:q_RKHS}.

\begin{restatable}{theorm}{restateMainThm}
\label{thm:convergence} 
Let~\Cref{assump:safe_seed,assump:q_RKHS,assump:sublinear} hold and $\n^{\star}$ be the largest integer such that \mbox{$\frac{\n^{\star}}{\betat[\n^{\star}] \gamma_{\n^{\star}}} \leq \frac{C_1}{\epsilon^2}$} with \mbox{$C_1 \coloneqq 8/\log(1+\sigma^{-2}_\eta)$}. With probability at least $1-\delta$, \autotuner satisfies~\cref{eq:constraint} for the evaluated parameters \mbox{$\btheta_{\n}, \forall \n\geq 1$} and the closed-loop system of~\cref{eq:MPCFormulation} satisfies state and input constraints for all times $t \geq 0$. Moreover, \mbox{$\exists \n \leq \n^{\star}$} satisfying,
\begin{align*}
    \constrain(\btheta_{\n}) \geq \max_{\btheta\in \constSet[,\epsconst]{}} \constrain(\btheta) - \epsilon.
\end{align*}
with probability at least $1-\delta$.
\end{restatable}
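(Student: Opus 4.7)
The plan is to split Theorem~\ref{thm:convergence} into three independent claims: (i) performance-constraint satisfaction at every sampled iteration, (ii) MPC state/input constraints along the closed loop, and (iii) $\epsilon$-optimality in at most $n^\star$ iterations.

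I would begin by conditioning on the high-probability event of \cref{cor:beta}, under which $l_{\n}(\btheta)\le \constrain(\btheta)\le u_{\n}(\btheta)$ for all $\btheta\in\Domain$ and all $\n\ge 1$; this event has probability at least $1-\delta$ and all remaining claims are proved conditional on it. By definition of the pessimistic Lipschitz operator, every $\btheta\in \pessiSet[]{\n-1}$ admits a $\btheta'$ reachable from $\safeset_0$ through iterated one-step pessimistic expansions with $l_{\n-1}(\btheta')- L\,d(\btheta,\btheta')\ge \tau$; the $L$-Lipschitz continuity of $\constrain$ combined with $l_{\n-1}\le \constrain$ then give $\constrain(\btheta)\ge \tau$. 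By inspection of \cref{alg:safe_epsilon_ucb}, whenever a new $\btheta_{\n}$ is evaluated it is either returned by \textsc{CE} (hence in $\pessiSet[]{\n-1}$ by line~\ref{alg-se:line:1}) or equal to $\goal_{\n}\in\pessiSet[]{\n-1}$, settling claim~(i). Claim~(ii) follows because the MPC in \cref{eq:MPCFormulation} explicitly encodes $\bx_i\in\xset$ and $\bu_i\in\uset$ along its prediction horizon, so standard recursive-feasibility arguments for receding-horizon control transfer these constraints to the closed loop for all $t\ge 0$, provided the MPC is feasibly initialized per \cref{assump:safe_seed}.

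For claim~(iii), I would adapt the expansion analysis of \cite{prajapat2024safe} to the discrete-set operators used here. The key quantitative tool is the standard information-gain bound $\sum_{\n=1}^{N} w_{\n-1}(\btheta_{\n})^2 \le C_1\,\betat[N]\,\gamma_N$, which combined with Cauchy--Schwarz implies that whenever $N\ge n^\star+1$ some evaluated sample must have width strictly less than $\epsilon$. I would then prove, by induction on the number of one-step reachability applications used to build $\constSet[,\epsconst]{}$ from $\safeset_0$, an expansion lemma: for every point in the $\m$-step $\epsilon$-close reachable set there is an iteration $\n\le n^\star$ with that point in $\pessiSet[]{\n}$. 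The inductive step exploits that, while the pessimistic set has not yet absorbed the next reachability ring, the goal $\goal_{\n}$ lies in $\optiSet[]{\n-1}\setminus\pessiSet[]{\n-1}$, so \textsc{CE} samples a point whose width exceeds $\epsilon$---an event the squared-width bound permits only finitely many times. At the first iteration $\n\le n^\star$ whose goal satisfies $\goal_{\n}\in\pessiSet[]{\n-1}$ with $w_{\n-1}(\goal_{\n})<\epsilon$, the expansion lemma places any $\btheta^\star\in\argmax_{\btheta\in\constSet[,\epsconst]{}}\constrain(\btheta)$ inside $\optiSet[]{\n-1}$, so the UCB rule in line~\ref{alg:line:4} yields $u_{\n-1}(\goal_{\n})\ge u_{\n-1}(\btheta^\star)\ge \constrain(\btheta^\star)$; chaining with $\constrain(\goal_{\n})\ge l_{\n-1}(\goal_{\n})> u_{\n-1}(\goal_{\n})-\epsilon$ gives $\constrain(\goal_{\n})\ge \constrain(\btheta^\star)-\epsilon$, which is the theorem's bound on the $\btheta_{\n}=\goal_{\n}$ sampled at the most recent pessimistic-in-goal iteration.

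The main obstacle is the expansion lemma: making rigorous that the pessimistic set sweeps the entire $\epsilon$-close reachable region within $n^\star$ iterations without any explicit dependence on the cardinality of $\Domain$, which is the improvement over \cite{goose} advertised in the introduction. The bookkeeping must simultaneously show (a) every expansion-driven sample lies at most one Lipschitz step from the current pessimistic frontier and (b) each such sample contributes a width $\ge \epsilon$ to the information-gain sum, so that the pigeonhole from the squared-width inequality caps the total number of frontier samples by $n^\star$ independently of $|\Domain|$. Once this lemma is in place, both termination branches close cleanly from the monotonicity of $l_{\n},u_{\n}$ in $\n$ combined with the UCB chain above.
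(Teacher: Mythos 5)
Your parts (i) and (ii) follow the paper's reasoning (sampling only inside $\pessiSet[]{\n-1}$, then Lipschitzness plus $\lbconst[\n-1]\le \constrain$ on the event of \cref{cor:beta}, and MPC feasibility from \cref{assump:safe_seed}), but your part (iii) rests on an ``expansion lemma'' that is both unnecessary and not provable with the theorem's $\n^{\star}$. The information-gain pigeonhole ($\sum_{\n} w_{\n-1}^2(\btheta_\n)\le C_1\betat[N]\gamma_N$) only caps how many times the algorithm can evaluate a point whose width exceeds $\epsilon$; it does not show that the pessimistic set sweeps all of $\constSet[,\epsconst]{}$ within that budget. Requiring coverage of the reachable set is exactly the step that introduces the $|\tilReachOper[\epsilon]{\safeset_0}|$ factor in the \safeopt/\goose analyses~\cite{safeopt_15,goose}: in the worst case each expansion ``ring'' must be learned to accuracy $\epsilon$ before the next ring can be certified, so a sample bound free of that cardinality factor cannot deliver the statement ``every $\btheta$ in the $\m$-step $\epsilon$-close reachable set enters $\pessiSet[]{\n}$ for some $\n\le \n^{\star}$.'' You correctly flag this as the main obstacle, but it is not bookkeeping --- it is the reason the paper never argues via coverage.

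The paper closes the argument differently. First, \cref{lem:pessi_subset} shows $\pessiSet[]{\n}\subseteq \pessiSet[,\sempc]{\n}=\pessiOper[]{\n}(\Domain)$, so the sampling rule $\btheta_\n\in\pessiSet[]{\n-1}$ with $\sumMaxwidth[]{\n-1}(\btheta_\n)\ge\epsconst$ inherits Theorem~1 of~\cite{prajapat2024safe} as a black box (\cref{lem:sagempc}): constraint satisfaction and the existence of $\n\le\n^{\star}$ at which every point of $\pessiSet[]{\n}$ has width below $\epsconst$ --- this replaces your induction over frontier samples. Second, for optimality no pessimistic coverage is needed: on the event of \cref{cor:beta}, $\constrain\le\ubconst[\n-1]$ implies each one-step true reachability operator is dominated by the optimistic operator, hence $\constSet[,\epsconst]{}=\tilReachOper[\epsilon]{\safeset_0}\subseteq\optiSet[]{\n-1}$ directly. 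Then the UCB rule gives $\ubconst[\n-1](\bhtheta)\le\ubconst[\n-1](\goal_\n)$ for $\bhtheta\in\argmax_{\btheta\in\constSet[,\epsconst]{}}\constrain(\btheta)$, and once $\sumMaxwidth[]{\n-1}(\goal_\n)<\epsconst$ the chain $\ubconst[\n-1](\bhtheta)\le\ubconst[\n-1](\goal_\n)<\lbconst[\n-1](\goal_\n)+\epsconst\le\constrain(\goal_\n)+\epsconst$ yields $\constrain(\goal_\n)\ge\max_{\btheta\in\constSet[,\epsconst]{}}\constrain(\btheta)-\epsconst$ (\cref{lem:near_optimality}). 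Your final UCB chain is exactly this lemma, but you reach ``$\btheta^{\star}\in\optiSet[]{\n-1}$'' the hard way; replace the expansion induction by this direct containment and your argument closes without any dependence on $|\Domain|$.
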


The proof is in \cref{chp:proof}. Thus, \autotuner guarantees satisfying~\cref{eq:constraint} with high probability at each tuning iteration. Moreover, it ensures finite time convergence to the reachable optimal tuning parameters under performance constraints. Intuitively, ~\cref{eq:constraint} is ensured by executing the parameters from the pessimistic set; while the other state and input constraints are ensured directly by MPC (see \cref{eq:MPCFormulation}). Since the final recommended tuning parameter is as per \ucb (optimistic estimate of performance) with uncertainty below $\epsilon$, we are guaranteed to converge to the optimal solution reachable under performance constraints. In contrast to the earlier sample complexity results \cite{safeopt_15} in discrete domains, we do not have an explicit dependence on the domain size $|\tilReachOper[\epsilon]{\safeset}|$. This makes our bound tighter and more usable for larger domains (or finer discretization). We achieved this by extending the sample complexity analysis of \cite{prajapat2024safe} from continuous domains to discrete domains. In particular, we show that the pessimistic set, $\pessiSet[]{\n}$ formed using discrete expansion operators (\cref{eq:pessi_oper}) is a subset of the pessimistic set of the continuous domain \cite{prajapat2024safe}, and additionally with our sampling rule (\cref{alg-se:line:1}) the same tighter bound of \cite{prajapat2024safe} holds in the discrete domain as well.

\vspace{-.4em}
\section{Experimental results}
\label{chp:Experimental_results}

\begin{table*}[t]
    \centering
        \vspace{.2cm}
        \begin{tabular}{l|cccc|cccc}
            \toprule
            & \multicolumn{4}{c|}{\textbf{Simulation}} & \multicolumn{4}{c}{\textbf{RC platform}} \\ \cline{2-9}
            \makecell{\textbf{Algorithm}} &\makecell{\#Constraint \\ violations} &\makecell{Min. \\lap time[s]} &\makecell{Mean lap time \\ $\pm$ std. deviation[s]} &\makecell{Number of \\ iterations}&\makecell{\#Constraint \\ violations} &\makecell{Min. \\lap time[s]} &\makecell{Mean lap time \\ $\pm$ std. deviation[s]} &\makecell{Number of \\ iterations}
            \\ \midrule
            \makecell{\autotuner}         &\makecell{$\mathbf{0}$}       &\makecell{$\mathbf{4.68}$}        &\makecell{$5.57 \pm 0.57$} &\makecell{$\mathbf{28}$}        &\makecell{$\mathbf{0.33}$}       &\makecell{$\mathbf{6.49}$}        &\makecell{$7.55 \pm 0.50$} &\makecell{$\mathbf{21.66}$} \\ 
            \makecell{\safeopt}         &\makecell{$1.8$}       &\makecell{$4.71$}        &\makecell{$5.84 \pm 0.82$} &\makecell{70} &\makecell{$5.33$}       &\makecell{$6.52$}        &\makecell{$7.88 \pm 0.74$} &\makecell{70} \\
            \makecell{\gpucb}         &\makecell{$5.2$}       &\makecell{$\mathbf{4.68}$}        &\makecell{$\mathbf{5.03 \pm 1.02}$} &\makecell{70} &\makecell{$7.0$}       &\makecell{$6.82$}        &\makecell{$\mathbf{7.51 \pm 0.77}$} &\makecell{70} \\
            \makecell{\wml}         &\makecell{$2.83$}       &\makecell{$4.71$}        &\makecell{$5.30 \pm 0.80$} &\makecell{70}&\makecell{$6.66$}       &\makecell{$6.95$}        &\makecell{$7.60 \pm 0.68$} &\makecell{70} \\
            \makecell{\eic}         &\makecell{$7.0$}       &\makecell{$\mathbf{4.68}$}        &\makecell{$5.24 \pm 1.16$} &\makecell{70} & \makecell{$\_$} & \makecell{$\_$} & \makecell{$\_$} & \makecell{$\_$}\\
            \makecell{\crbo}         &\makecell{$16.4$}       &\makecell{$4.71$}        &\makecell{$5.55 \pm 1.39$} & \makecell{70} & \makecell{$\_$} & \makecell{$\_$} & \makecell{$\_$} & \makecell{$\_$}\\
            \bottomrule
        \end{tabular}
        \vspace{-.1cm}
    \caption{Averaged results of \autotuner and baselines. The algorithms were evaluated 5 times in simulation and 3 times with the RC platform.}
        \vspace{-.1cm}
    
    \vspace{-2em}
    \label{tab:comparisons}
\end{table*}

We present an extensive evaluation of \autotuner in an autonomous racing application. Our evaluation is conducted using an autonomous racing simulation and the RC platform of~\cref{fig:crs_car}. We first discuss the MPC formulation used in our experiments in~\cref{subsec:mpc}, followed by our experimental setup in~\cref{subsec:exp_setup}. Finally, we present the simulation and experimental results in~\cref{subsec:sim,subsec:car}.

\begin{figure}[h]
        \centering
        \includegraphics[width=0.6\columnwidth]{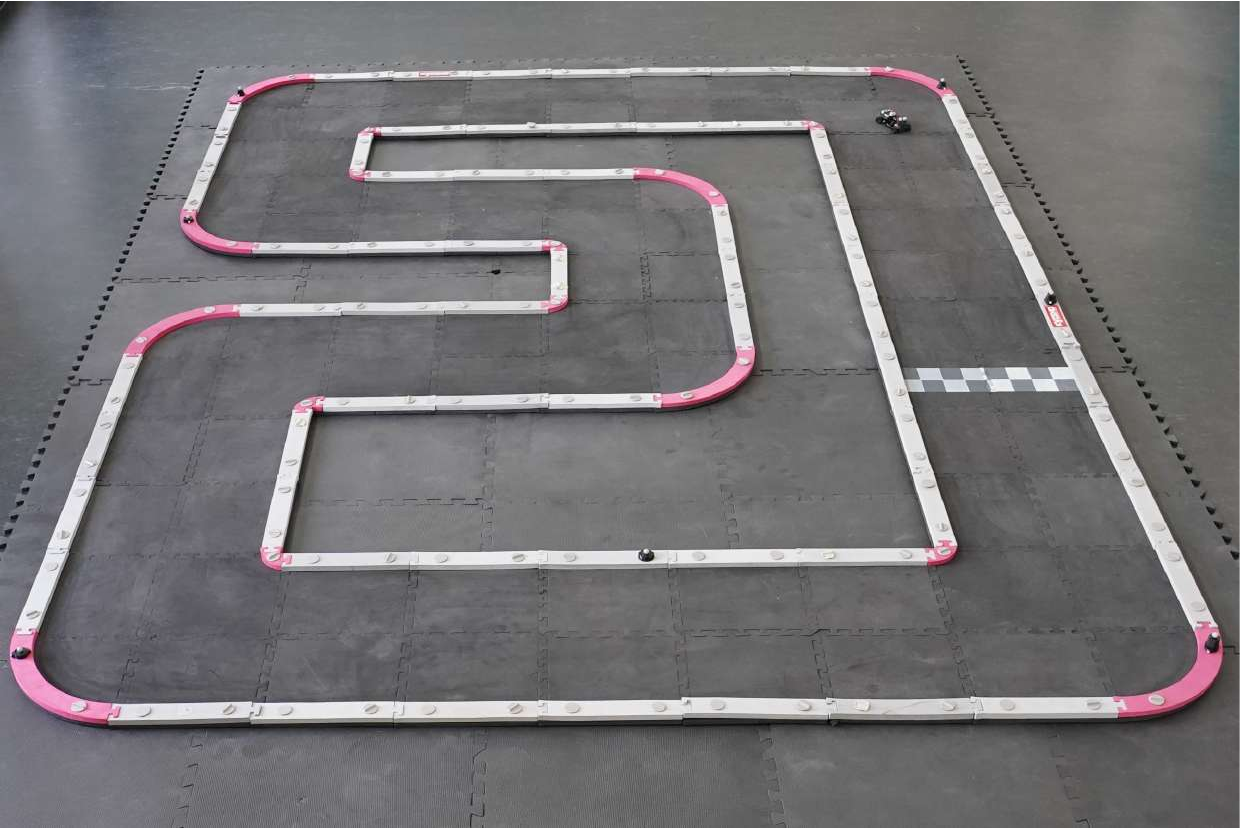}
        \caption{1:28 scale RC racecar~\cite{Carron2022ChronosAC} and track used in the experiments.}
        \label{fig:crs_car}
        \vspace{-1.5em}
\end{figure}

\subsection{Model predictive contouring control (MPCC)}
\label{subsec:mpc}
\looseness -1 In our experiments, we use a MPCC ~\cite{Liniger2015OptimizationbasedAR} formulation, which has been successfully applied for autonomous racing in a known track.  In particular, our MPCC is formulated as:
\begin{equation}    
\begin{aligned}
 \min_{\bu_{\mathrm{0:N}}} \quad &  \sum_{i=0}^{N}{-q_{\lambda} \lambda_i + \boldsymbol{e}_i^T \bbigq \boldsymbol{e}_i + \bu_i^T \bbigr\bu_i}\\\
    \textrm{s.t.} \quad & \bx_0 = \bxhat(t), \, \bx_{i+1} = \dyn_d(\bx_i, \bu_i) \\
    &\bx_i \in \xset, \, \bu_i \in \uset, \, \boldsymbol{g}_i(\bx_i, \bu_i) \in \mathcal{G},
    \label{eq:MPCCFormulation}
\end{aligned}
\end{equation}

\noindent \looseness -1 where \mbox{$\bx_i=[x_i, y_i, \psi_i, v_{x_i}, v_{y_i}, \dot{\psi}_i]$} is the state of the car (including position, orientation, and velocities), \mbox{$\bu_i=[\delta_i, T_i]$} is the control input (steering angle and drivetrain command), $\lambda$ is the parameter that determines the progress along the reference trajectory, $\boldsymbol{e}_i$ denotes contour and lag errors,~$\dyn_d(\cdot, \cdot)$ is the nominal car model consisting of a Pacejka dynamic bicycle model~\cite{PACEJKA2006v}, and $\boldsymbol{g}_i(\bx_i, \bu_i)$ are linear and nonlinear constraints on the states and inputs. The matrix \mbox{$\bbigq=\text{diag}([Q_{contour}, Q_{lag}])$} controls the longitudinal and lateral deviation from the reference trajectory, $q_{\lambda}$ regulates the progress of the car, and $\bbigr$ determines the smoothness of the inputs. 

This MPC aims to maximize progress while penalizing deviations from the reference trajectory (we use the middle trajectory as a reference).  Note that this MPC formulation does not directly minimize time. However, we set lap time as the performance function of \autotuner to achieve lap time minimization.  We set $N=40$, $q_{\lambda}=3.3$, 
  $\bbigr=\text{diag}([0.3, 0.1])$ and make use of the boundary constraints as introduced in~\cite{Liniger2015OptimizationbasedAR} with a track width of $0.46m$.

\subsection{Experimental setup}
\label{subsec:exp_setup}
To quantify the performance of the MPC, we define the performance function $\constrain$ as the negative lap time of a single flying lap, where the car does not start from a stationary position. The negative sign is introduced to reflect the objective of minimizing lap time. Additionally, we establish the performance upper bound as \mbox{$\tau = \tau_{scale}(\constrain(\safeset_0[0]) +\eta)$}, where \mbox{$\tau_{scale} \geq 1$} is a user-specified scaling factor, and \mbox{$\constrain(\safeset_0[0]) +\eta$} represents a noisy evaluation of the negative lap time, due to different errors while running in the real-world, e.g., state estimation or process noises. This noisy laptime is obtained using the initial seed parameters, which are known a priori from manual tuning. Hence, we constrain the lap time to always be lower than the initial lap time multiplied by a scaling factor that is larger than one.

We conduct a comparative analysis of our proposed method with several unconstrained methods, namely,  \gpucb~\cite{Srinivas2009GaussianPO}, Weighted Maximum Likelihood (\wml)~\cite{Romero2022WeightedML}, and Confidence Region Bayesian Optimization (\crbo)~\cite{pmlr-v144-frohlich21a}.  Additionally, we evaluate our method against constrained optimization methods, specifically, (\eic)~\cite{Gardner2014BayesianOW} and \safeopt~\cite{safeopt_15}.

In our experiments, we jointly optimize $Q_{contour}$ and $Q_{lag}$ (see~\cref{eq:MPCCFormulation}). We uniformly discretize the parameters' domain into 10,000 combinations within the range of $\left[ 0,\, 1000 \right]^2$. These combinations are normalized to $\left[ 0,\, 1 \right]^2$. Furthermore, we set the initial weights of $Q_{contour}$ and $Q_{lag}$ to 500. For the methods that use a Gaussian Process to model the performance function, we select a Matérn Kernel with a smoothness parameter of \mbox{$\nu = 5/2$}. A unique length-scale of \mbox{$l=0.1$} is chosen for both dimensions and we use \mbox{$\beta = 5.0$}.

\subsection{Simulation results}
\label{subsec:sim}

We present a comprehensive evaluation of our method in comparison to the baselines over a total of 70 iterations, during which the methods are permitted to sample and assess 70 distinct parameters. Note that, in this setup, \autotuner takes less than 70 iterations due to its termination criteria. As presented in~\cref{tab:comparisons} and \cref{fig:cum_regret}, our method outperforms baselines in terms of performance constraint violations while converging to the optimal parameters in 30 iterations. We observe that \autotuner achieves the same lap time as \gpucb, indicating that both algorithms converge to the optimal parameters. However, \gpucb, as well as other baselines, show slightly better cumulative regret over the initial 30 iterations (see~\cref{fig:simu}). This is mainly due to the simulation domain $D$, where a large part satisfies~\cref{eq:constraint} (see~\cref{fig:samples_results}). As a result, unconstrained baselines can converge faster to the optimal parameters
since they do not take into account the constraint. However, they
still violate~\cref{eq:constraint}. Although cumulative regret indicates convergence, our focus is on achieving the best lap time as quickly as possible without violating~\cref{eq:constraint} and in all these respects, \autotuner outperforms the baselines. Note that we tested \autotuner in multiple tracks and achieved a similar performance. Due to space constraints, we only present the results of one track.

\begin{figure}[h]
  \vspace{-.5em}
    \centering
    \begin{subfigure}{0.49\columnwidth}
    \includegraphics[width=\columnwidth]{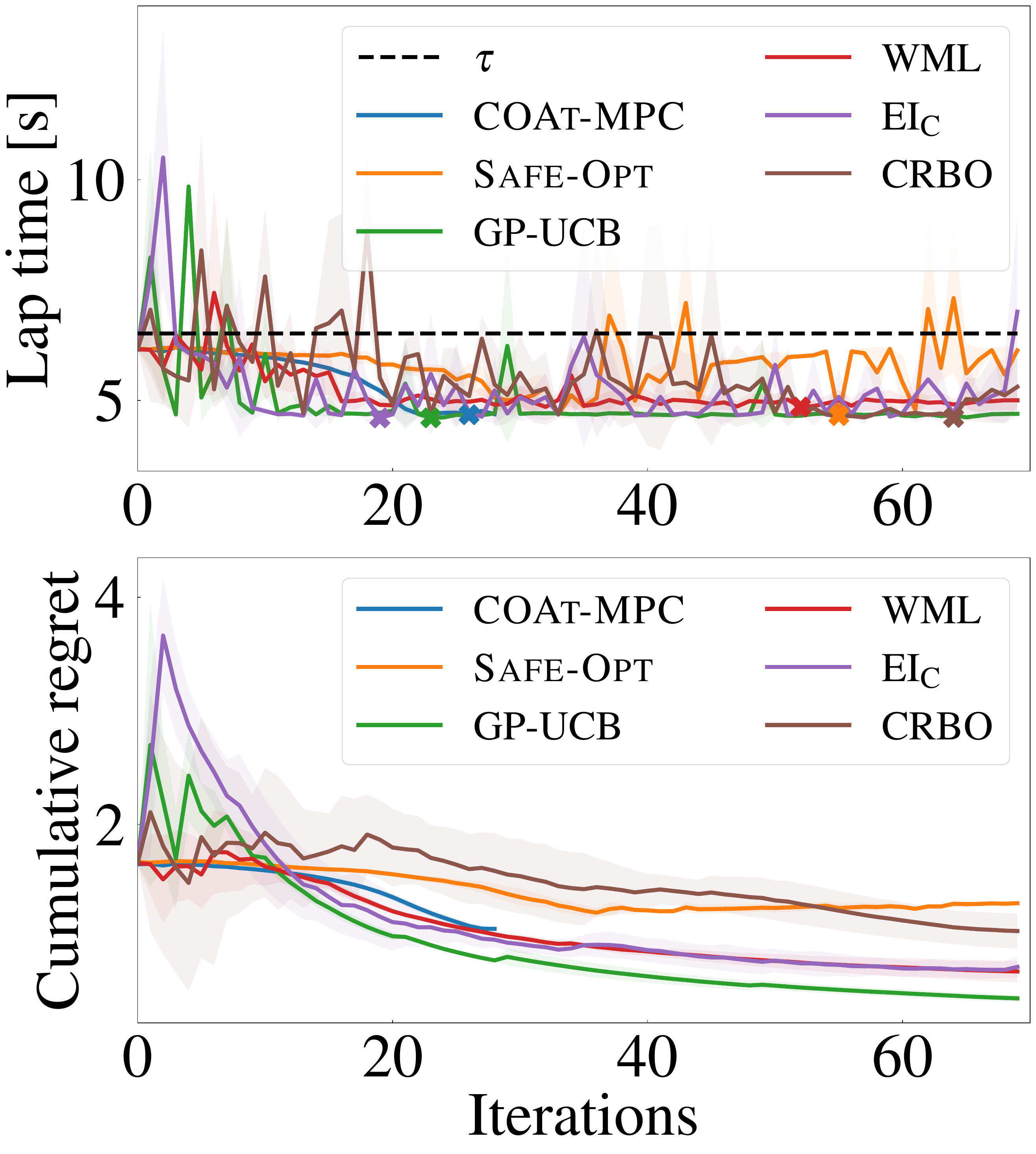}
    \caption{Simulation results.} \label{fig:simu}
    \end{subfigure}
    \begin{subfigure}{0.49\columnwidth}
    \includegraphics[width=\columnwidth]{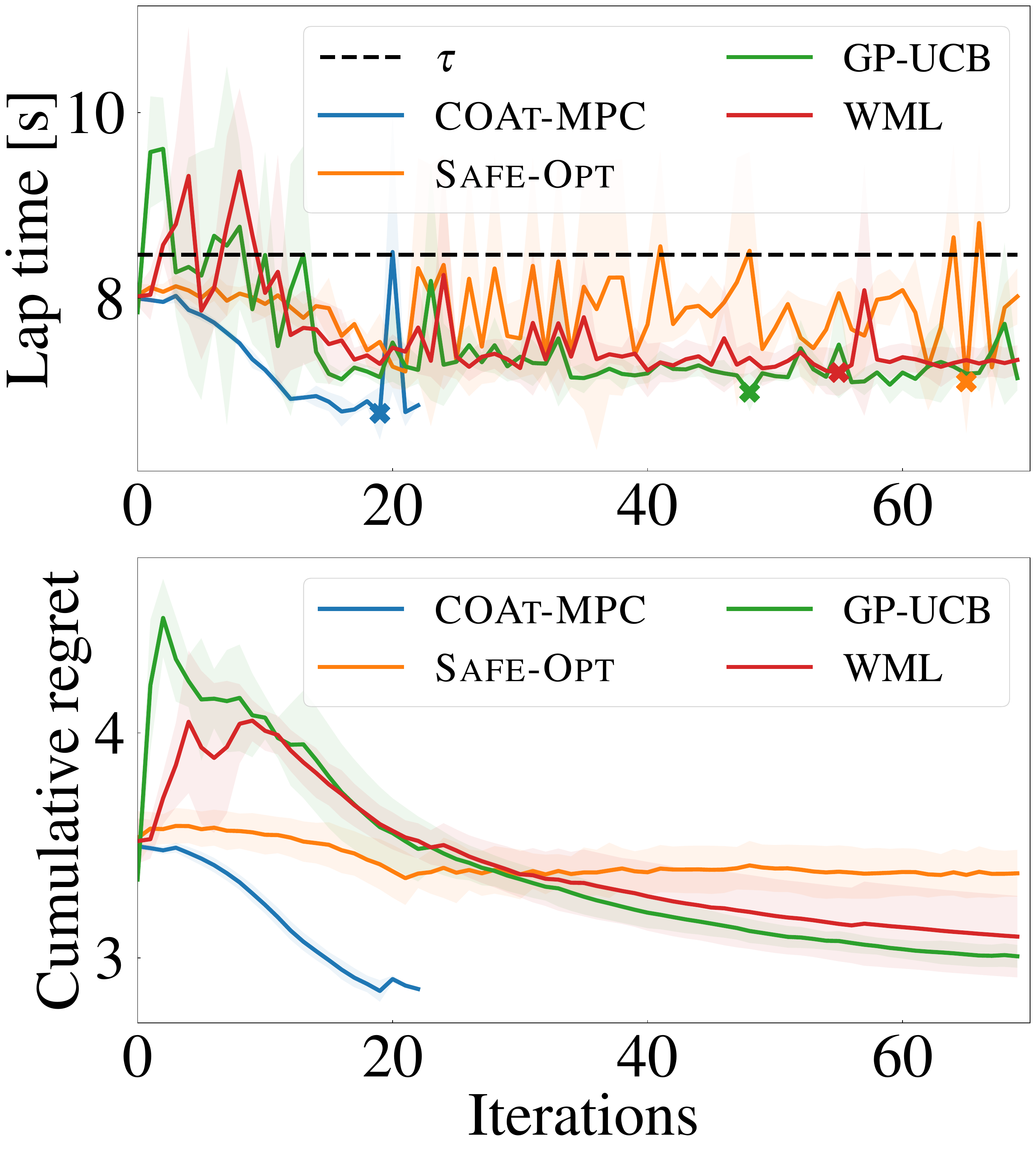}
    \caption{RC platform results.}
    \end{subfigure}
  \caption{Lap time and cumulative regret over time with their standard deviation. \autotuner converges in only 20-30 iterations and achieves the lowest cumulative regret with the RC platform. Cross markers indicate each method's minimum laptime. The initial laptime (iteration 0) is the same for all methods.}
  \vspace{-0.5em}
  \label{fig:cum_regret}
\end{figure}

\cref{fig:samples_results} illustrates the tuning process of our method and baselines. \autotuner starts by sampling close to the initial seed, aiming to expand the pessimistic set. Next, our method cautiously samples parameters that result in an improved lap time, and converges once it is $\epsilon$-certain that it has found the optimal parameters. \autotuner performs a goal-directed exploration to converge to the optimal parameters. Our method requires less exploration to converge compared to the baselines and always satisfies the performance constraint.

\subsection{RC platform results}
\label{subsec:car}

After observing the outcomes in simulation, our algorithm is benchmarked against \safeopt, \gpucb, and WML, which proved to be the best among all baselines in terms of constraint violations. As shown in Table \ref{tab:comparisons}, our approach surpasses the baselines in terms of constraint violations and closely approximates the average mean lap time of \gpucb, while effectively converging to the optimal parameters. Note that \autotuner violates~\cref{eq:constraint} once in our experiments. During our experiments, external factors such as odometry noise influenced performance, potentially causing the car to crash and violate the constraints. The \autotuner's constraint violation could be attributed to one of these external factors. 

Our method stands out by achieving the lowest cumulative regret and converging to the optimal parameters in just 20-25 iterations, as shown in~\cref{fig:cum_regret} and~\cref{tab:comparisons}. Additionally, as demonstrated in the regret plot of~\cref{fig:cum_regret}, \autotuner achieves a lower minimum regret as compared to the baselines, which indicates that our method is capable of converging to the optimal set of parameters while satisfying~\cref{eq:constraint} with high probability. Furthermore, \autotuner reaches the lowest cumulative regret as a consequence of its sampling taking into account~\cref{eq:constraint}, which is important due to a small true safe set. Other baselines struggle with the small safe set and violate the constraint, leading to a worse cumulative regret. \cref{fig:samples_results} illustrates the tuning process of all methods.

\begin{figure}[h]
    \vspace{.1cm}
    \begin{tabular}{cc}
        \multicolumn{2}{c}{\textbf{Simulation}}\\
        \includegraphics[width=.48\columnwidth]{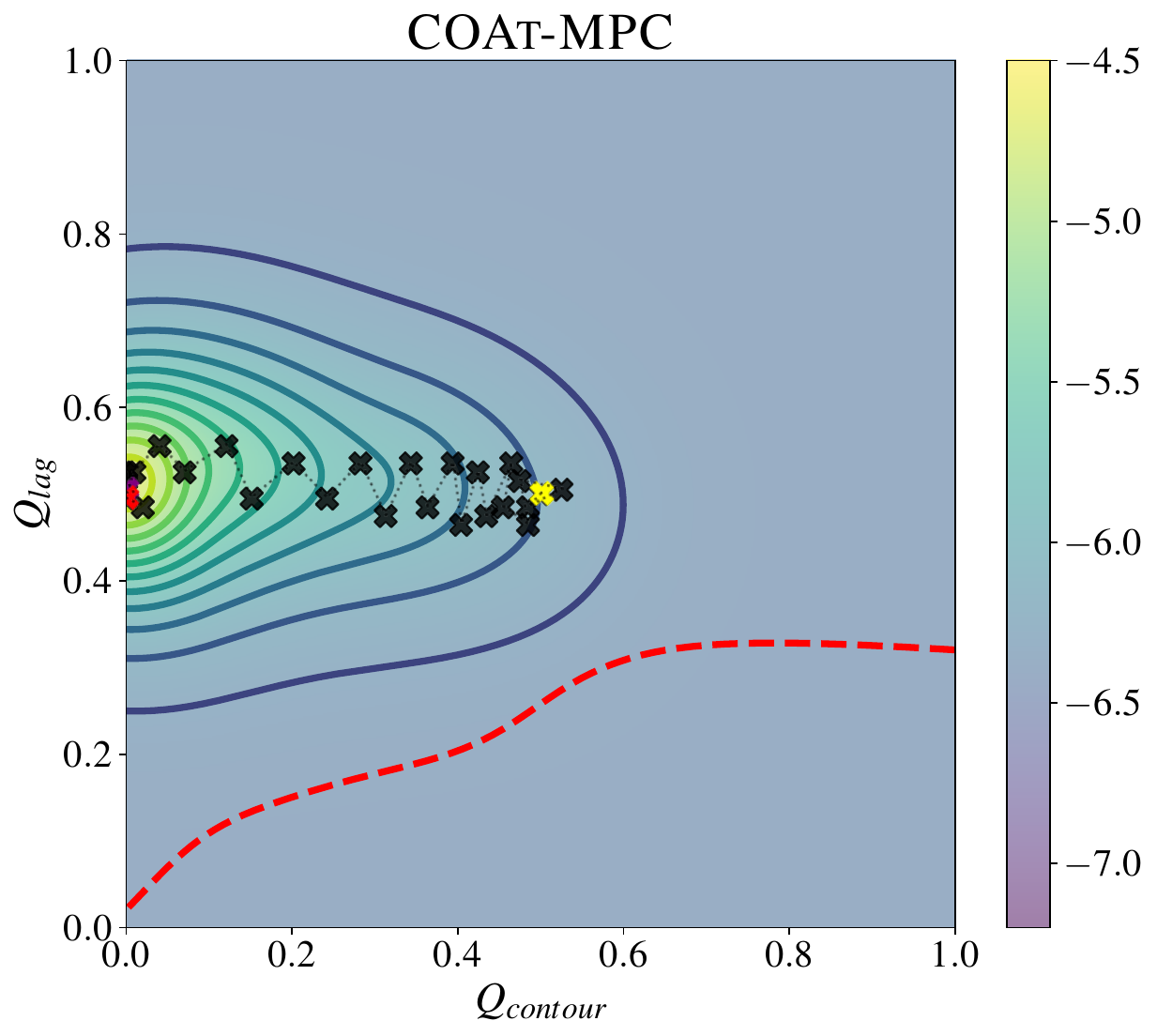} &
        \includegraphics[width=.48\columnwidth]{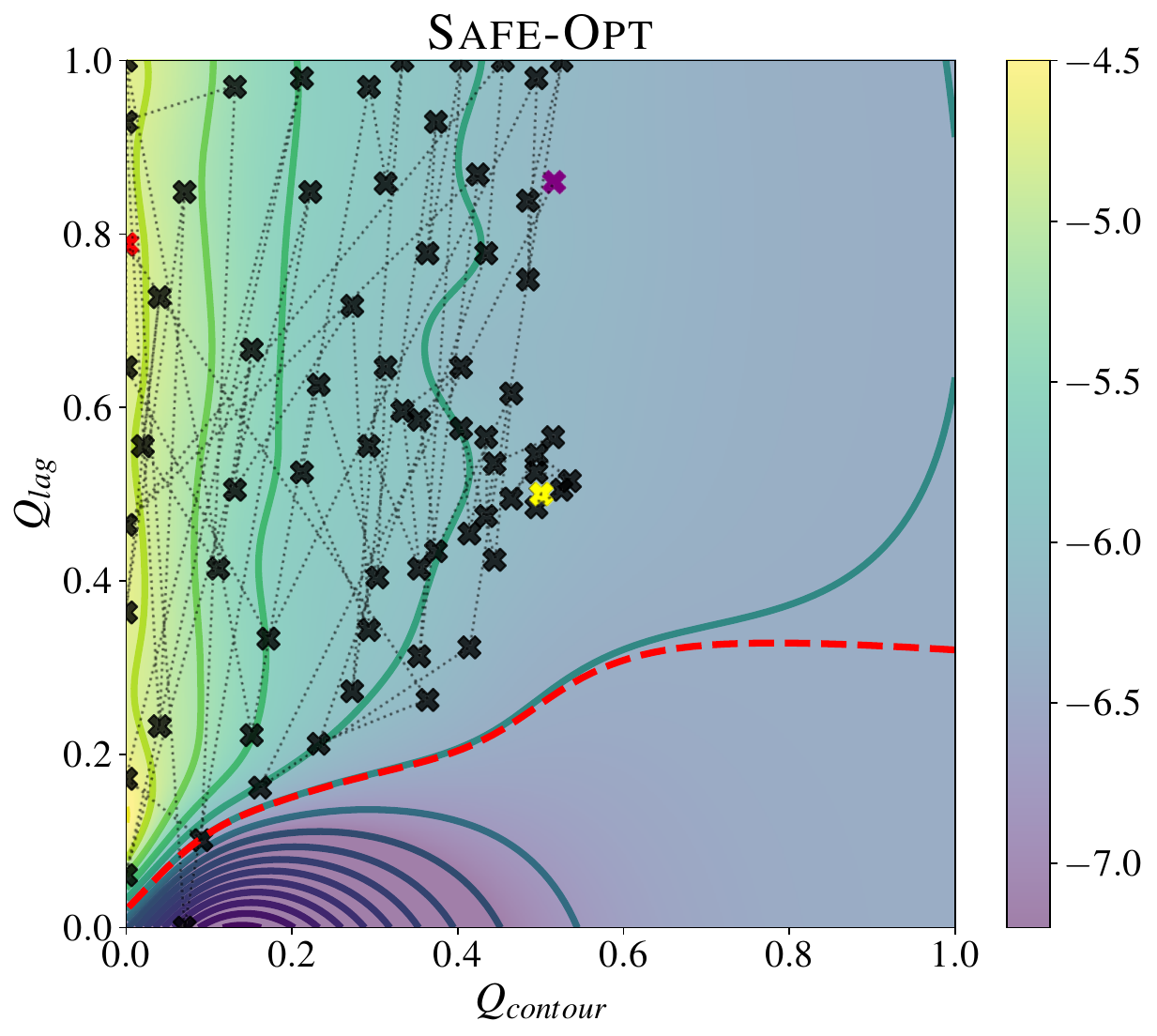} \\ 
        \includegraphics[width=.48\columnwidth]{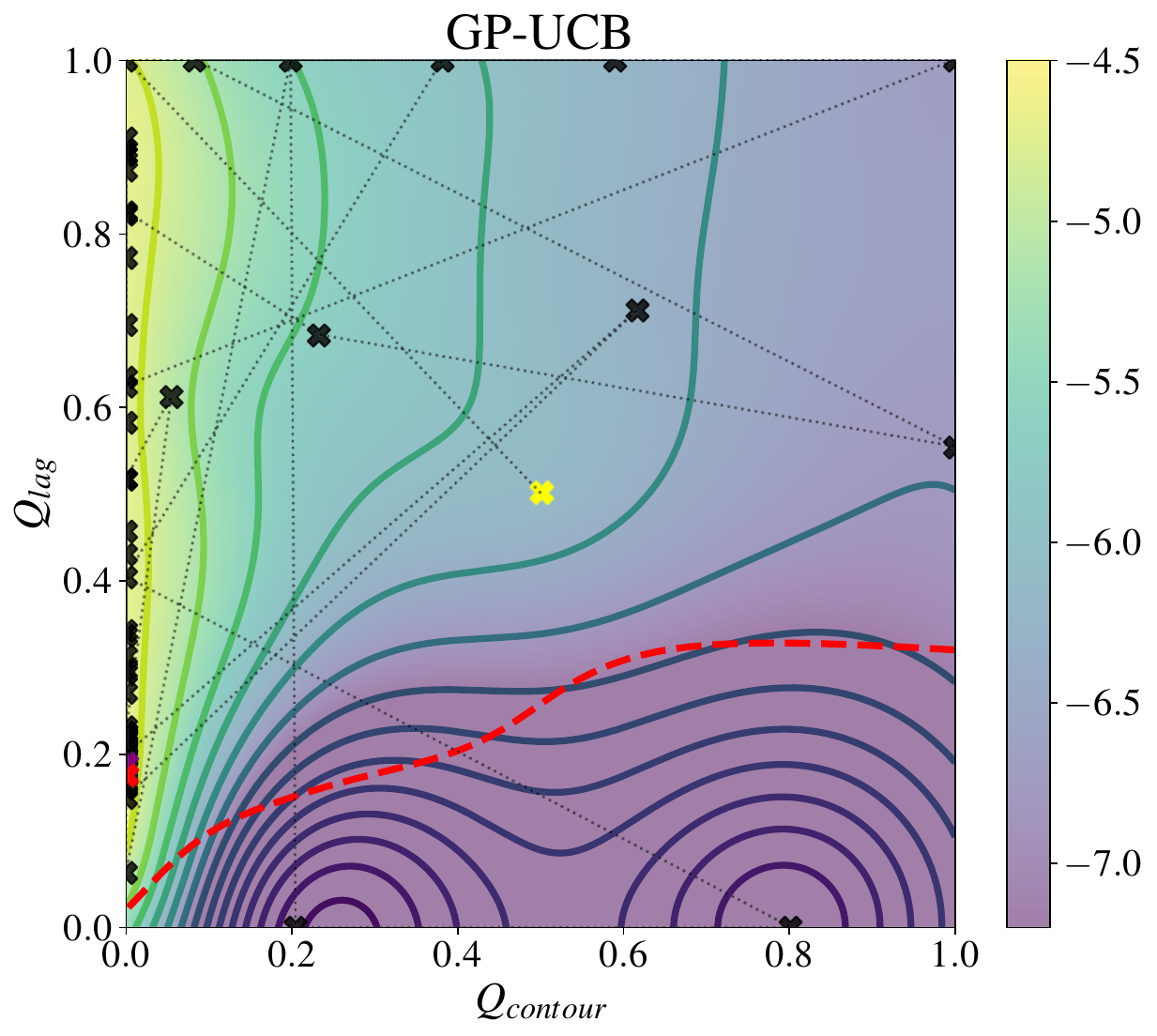}&
        \includegraphics[width=.48\columnwidth]{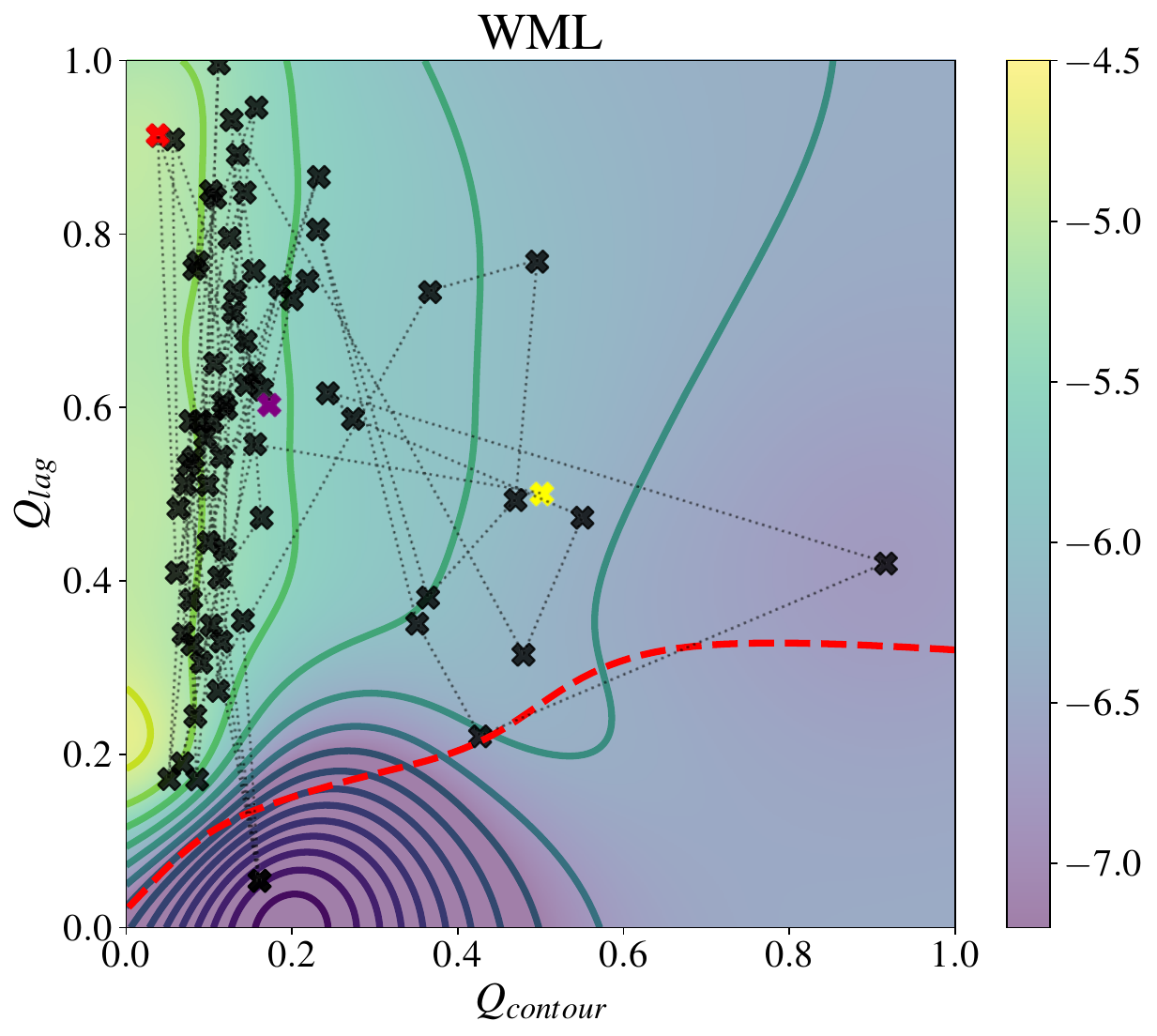} \\
        \multicolumn{2}{c}{\textbf{RC platform}}\\
        \includegraphics[width=.48\columnwidth]{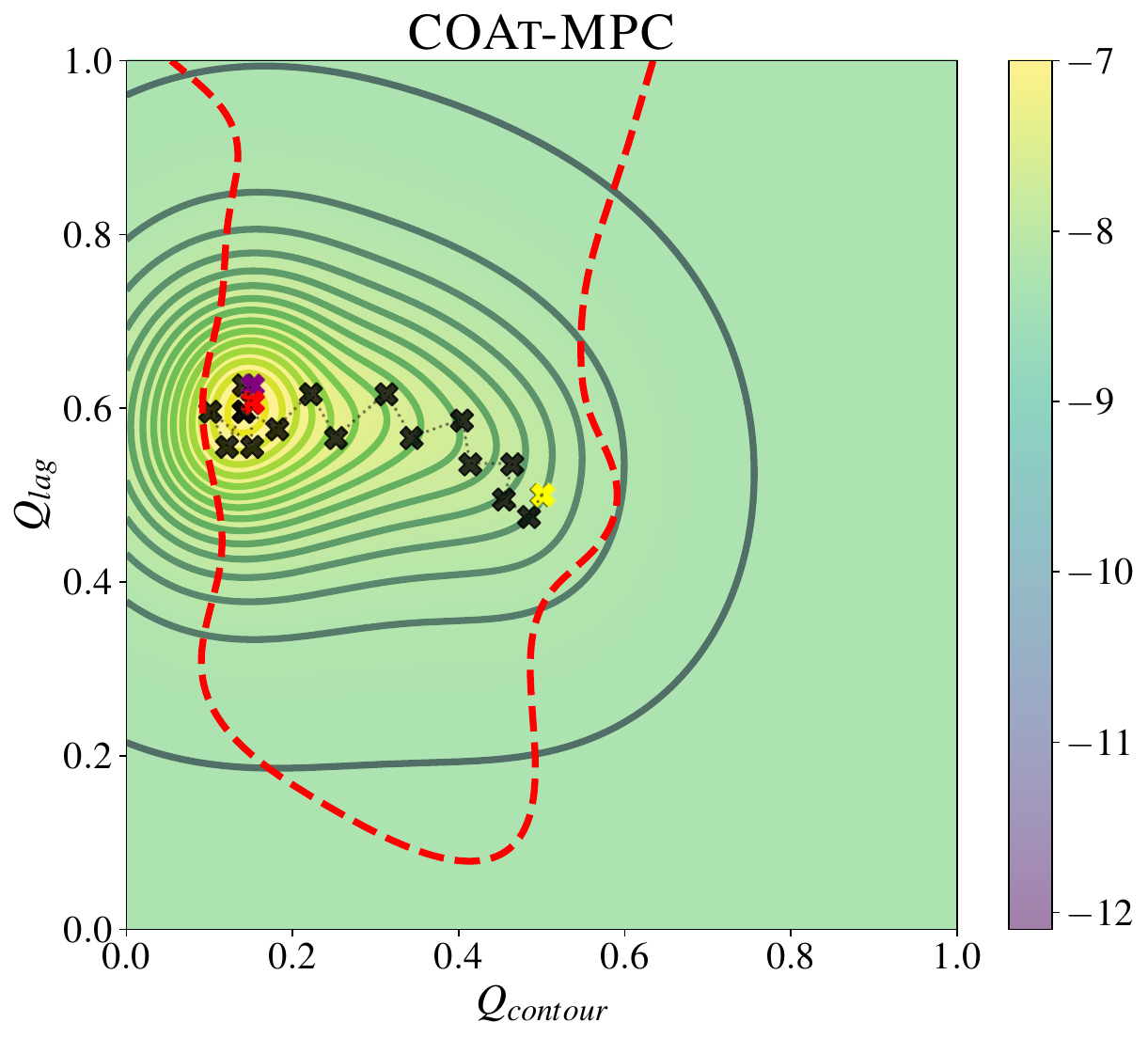} &
        \includegraphics[width=.48\columnwidth]{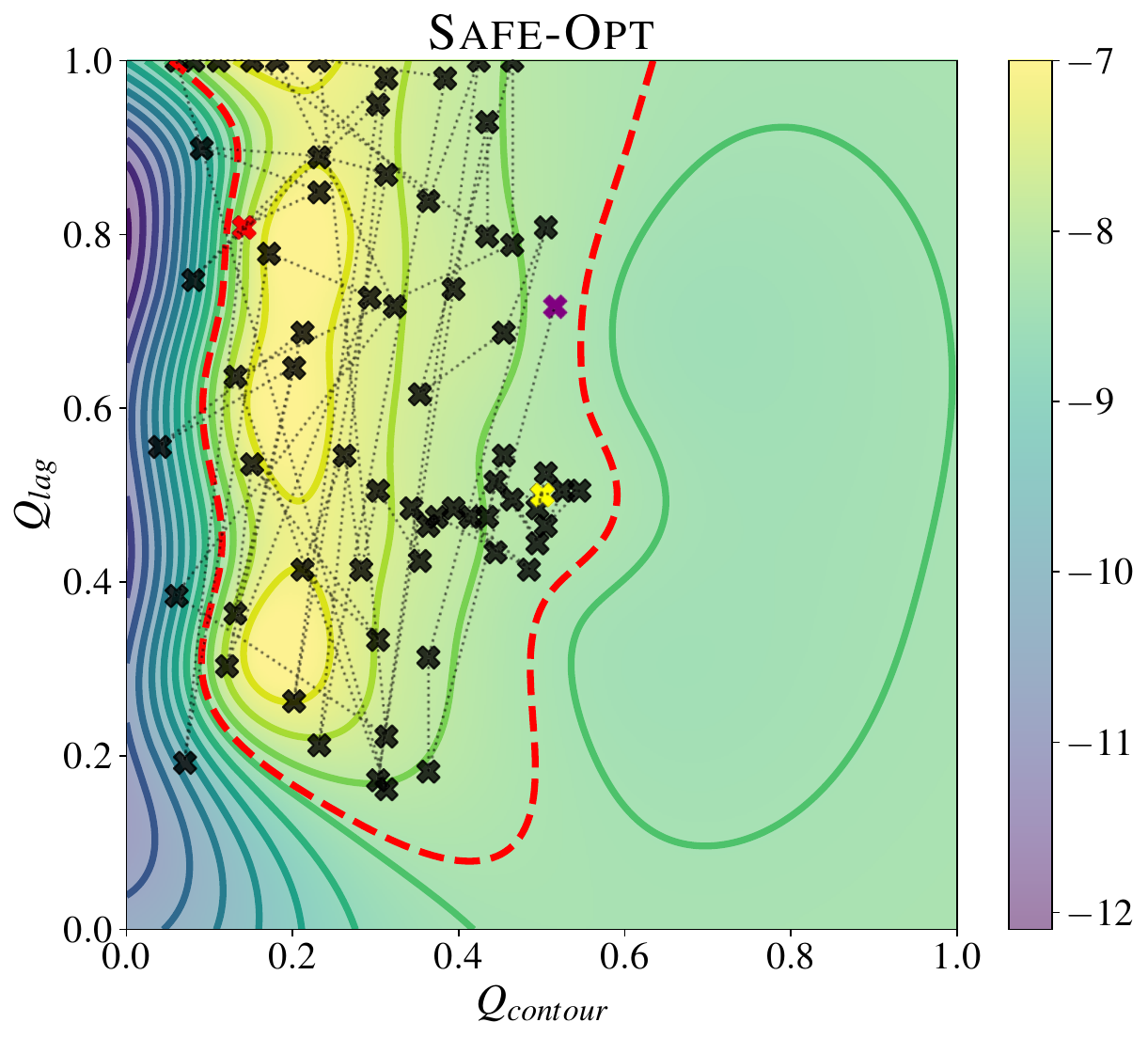} \\
        \includegraphics[width=.48\columnwidth]{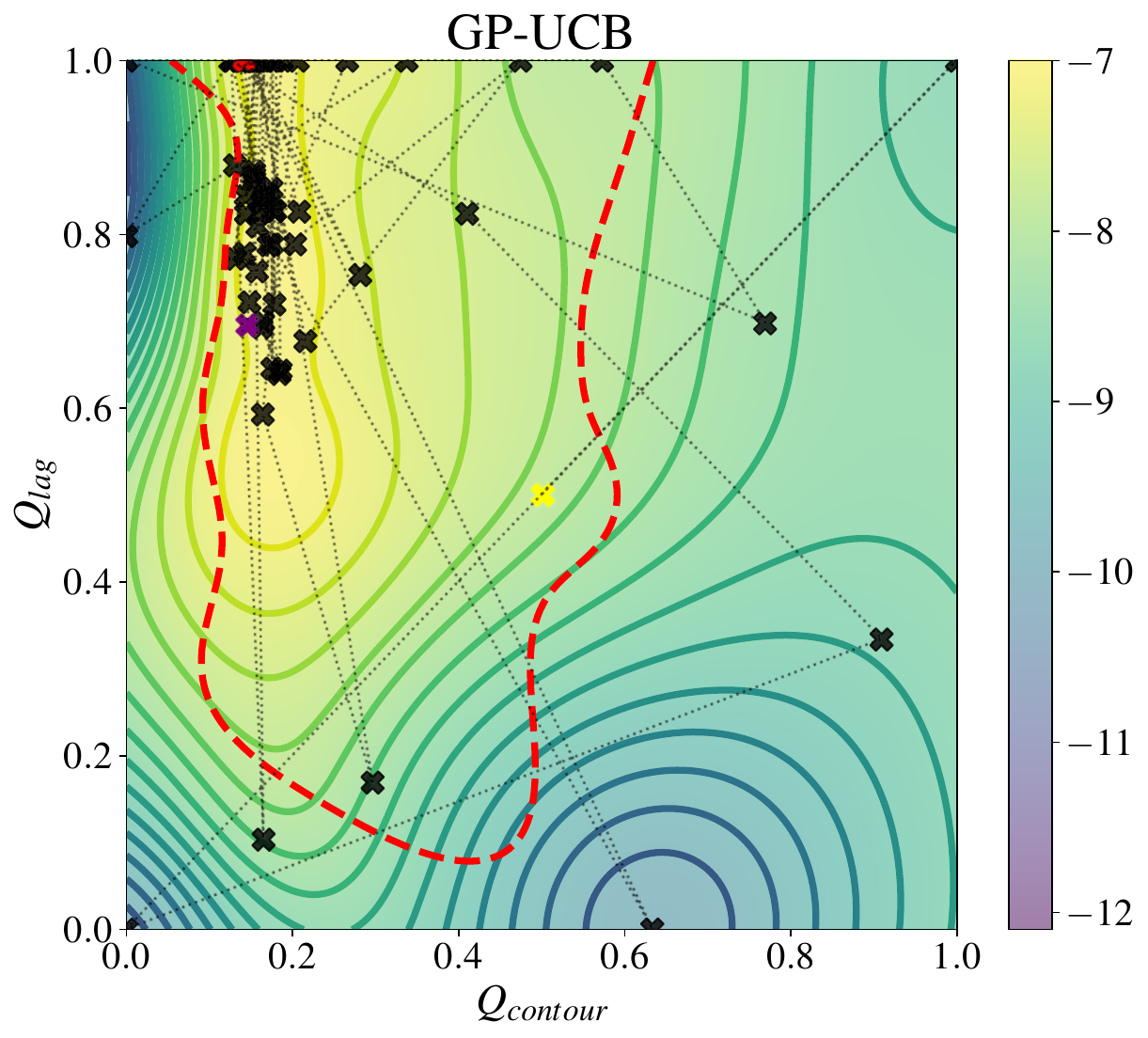} &
        \includegraphics[width=.48\columnwidth]{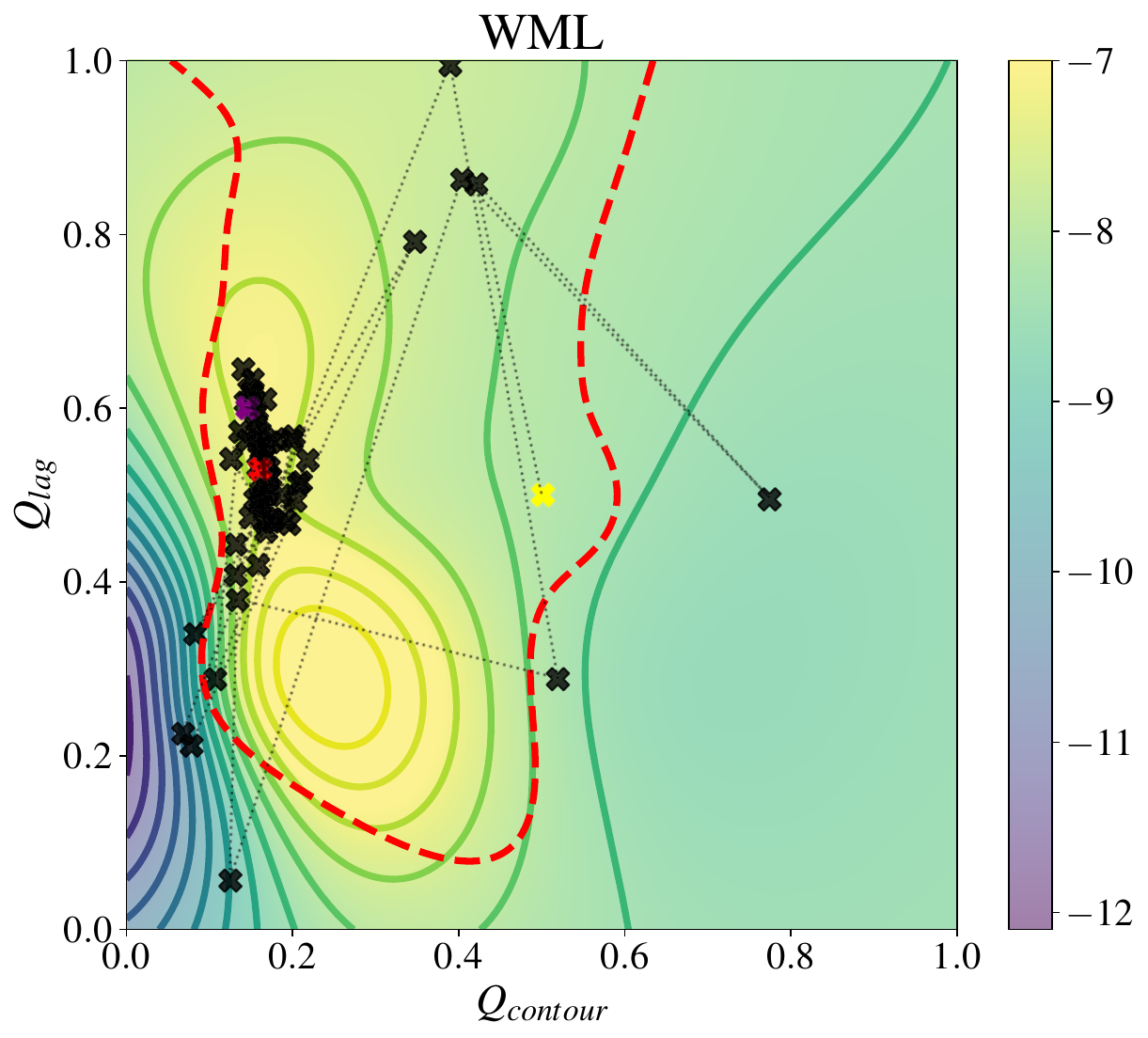}
        \end{tabular}
        \vspace{-.1cm}
    
    \caption{Methods' tuning process. The black marker denotes the method's samples. The black lines denote the sample trajectories. The yellow marker denotes the initial values of $Q_{contour}$ and $Q_{lag}$, the purple marker is the last sample, and the red marker is the best lap time sample. The red line denotes the area where $\constrain(\btheta) = \tau$. The heatmap is the GP posterior mean.}
    \vspace{-0.5em}
    \label{fig:samples_results}
    \end{figure}

\section{Conclusions}
\label{chp:Conclusions}

We propose \autotuner, a method for MPC tuning that guarantees a performance constraint satisfaction with arbitrarily high probability. Our approach leverages the assumption of Lipschitz continuity in the objective function to construct pessimistic and optimistic constraint sets. We use the optimistic set to define a goal location at each iteration, while we restrict our recommendations to be within the pessimistic set. We present a theoretical analysis of our method, conclusively demonstrating its ability to achieve optimal tuning parameters in finite time while guaranteeing the satisfaction of the performance constraint with arbitrarily high probability. Additionally, our evaluation against state-of-the-art methods demonstrates that our method outperforms them in terms of the number of constraint violations, as well as in cumulative regret over time, in the context of MPC tuning for autonomous racing. Finally, an interesting line of future work could be to extend the method to work in high-dimensional parameter space. Our approach discretizes the parameter space to compute the pessimistic and optimistic sets, which leads to exponential space complexity.

\appendix
\subsection{Auxiliary lemmas for Proof of Theorem \ref{thm:convergence}}
\label{chp:proof}
In this section, we develop tools for the theoretical analysis of \autotuner, present auxiliary lemmas, and finally use them to prove Theorem~\ref{thm:convergence}.

To simplify analysis, we define a pessimistic set, \mbox{$\pessiSet[,\sempc]{\n} \coloneqq \{\btheta \in \Domain| \exists \btheta' \in \Domain, \lbconst[\n](\btheta') - L d(\btheta, \btheta')  \geq \tau \}$},
motivated from \cite{prajapat2024safe}.  
Note that \mbox{$\pessiSet[,\sempc]{\n}$} does not explicitly depend on the initial safe set and is defined over the domain, precisely,  \mbox{$\pessiSet[,\sempc]{\n} = \pessiOper[]{\n}(\Domain)$}.  
The following lemma establishes that our pessimistic set, $\pessiSet[]{\n}$ is subset of the one in \cite{prajapat2024safe}.
\begin{lemma}
    $\pessiSet[]{\n} \subseteq \pessiSet[,\sempc]{\n}, \forall \n \geq 0$. \label{lem:pessi_subset}
\end{lemma}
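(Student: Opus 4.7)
The plan is to proceed by induction on $\n$, with the key observation being that $\pessiOper[]{\n}$ requires a witness from its input set $\safeset$, whereas $\pessiSet[,\sempc]{\n}$ only requires a witness in the full domain $\Domain$. Hence whenever the argument fed into $\pessiOper[]{\n}$ is a subset of $\Domain$, the image is automatically contained in $\pessiSet[,\sempc]{\n}$. This reduces the lemma to a near-tautological chase of the definitions.

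\textbf{Base case $(\n=0)$.} Since $\pessiSet[]{0} = \safeset_0$ by \cref{alg:line:2}, I need $\safeset_0 \subseteq \pessiSet[,\sempc]{0}$. Using the standard convention in safe BO that the prior confidence bound certifies the initial seed (i.e.\ $l_0(\btheta) \geq \tau$ for $\btheta \in \safeset_0$, which is what makes \Cref{assump:safe_seed} actionable in the algorithm), for each $\btheta \in \safeset_0 \subseteq \Domain$ I take $\btheta' = \btheta$ as a witness, yielding $l_0(\btheta') - L\,d(\btheta,\btheta') = l_0(\btheta) \geq \tau$, so $\btheta \in \pessiSet[,\sempc]{0}$.

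\textbf{Inductive step.} The workhorse is the auxiliary claim: for any $\safeset \subseteq \Domain$, $\pessiOper[]{\n}(\safeset) \subseteq \pessiSet[,\sempc]{\n}$. Indeed, if $\btheta \in \pessiOper[]{\n}(\safeset)$ then there is $\btheta' \in \safeset \subseteq \Domain$ with $l_n(\btheta') - L\,d(\btheta,\btheta') \geq \tau$, and the same $\btheta'$ witnesses membership of $\btheta$ in $\pessiSet[,\sempc]{\n}$. Now $\pessiSet[]{\n-1} \subseteq \Domain$ holds trivially, and $\pessiOper[]{\n}(\safeset) \subseteq \Domain$ by construction, so the claim can be iterated: a sub-induction on $\m$ gives $\PessiOperReach[\m]{\pessiSet[]{\n-1}} \subseteq \pessiSet[,\sempc]{\n}$ for every $\m \geq 1$. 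Passing to the limit via \cref{eq:pessi_oper} yields $\pessiSet[]{\n} = \tilPessiOper[]{\pessiSet[]{\n-1}} \subseteq \pessiSet[,\sempc]{\n}$, which closes the induction.

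\textbf{Main obstacle.} There is essentially no hard analytic step; once the witness-set restriction is spotted, the inductive step is a one-line definition chase. The only subtle point is the base case, which relies on the prior $l_0$ certifying $\safeset_0$; an equivalent fix is to redefine $\pessiSet[]{0} := \tilPessiOper[]{\safeset_0}$, in which case the base case reduces to the same claim used in the inductive step. The argument also implicitly uses monotonicity of $l_n$ in $n$ from \cref{eq:conf_bounds}, which ensures that the sets $\pessiSet[,\sempc]{\n}$ themselves are non-decreasing in $\n$, so reapplications across iterations remain consistent.
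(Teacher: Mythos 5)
Your proof is correct and follows essentially the same route as the paper's: both reduce the claim to the observation that $\pessiSet[,\sempc]{\n} = \pessiOper[]{\n}(\Domain)$ and that the one-step operator only needs a witness in its (sub-domain) argument set, you via a sub-induction on $\m$ followed by the limit, the paper by applying $\pessiOper[]{\n}$ once more to the fixed-point limit and using monotonicity in the witness set. You are in fact slightly more careful than the paper about the $\n=0$ case (the paper's argument as written only covers $\n \geq 1$), and your patch via $l_0$ certifying $\safeset_0$ or redefining $\pessiSet[]{0}$ is a reasonable way to close it.
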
\vspace{-1.1em}
\begin{proof} 
Note that $\pessiSet[]{\n} = \tilPessiOper[]{\pessiSet[]{\n-1}}$ and using \cref{eq:pessi_oper}, we get, $\pessiSet[]{\n} =\lim_{\m \rightarrow \infty} \PessiOperReach[\m]{\pessiSet[]{\n-1}}$. This implies $\pessiSet[]{\n} = \pessiOper[]{\n} (\lim_{\m \rightarrow \infty} \PessiOperReach[\m]{\pessiSet[]{\n-1}}) \subseteq \pessiOper[]{\n} (\Domain) = \pessiSet[,\sempc]{\n}$. The last equality follows by definition of the set, $\pessiSet[,\sempc]{\n}$.   
\end{proof}
\begin{corollary} [Theorem 1 \cite{prajapat2024safe}] 
\label{lem:sagempc}
\looseness -1 Let \cref{assump:q_RKHS,assump:sublinear} hold and $\n^{\star}$ be the largest integer satisfying $\frac{\n^{\star}}{\beta_{\n^{\star}} \gamma_{\n^{\star}}} \leq \frac{C}{\epsconst^2}$, with $C = 8/ \log (1 + \noiseconst)$. The sampling scheme $\btheta_\n \in \pessiSet[]{\n-1} : \sumMaxwidth[]{\n-1}(\btheta_\n) \geq \epsconst$ satisfy $\constrain(\btheta_\n)\geq \tau, \forall \n \geq 1$ with probability at least $1-\delta$ and $\exists \n \leq \n^\star : \forall \btheta \in \pessiSet[]{\n},~\sumMaxwidth[]{\n}(\btheta)<\epsconst$. \label{thm:sample_complexity}    
\end{corollary}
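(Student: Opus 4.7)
The plan is to leverage Theorem 1 of~\cite{prajapat2024safe}, which established the analogous statement for the ``continuous-domain'' pessimistic set $\pessiSet[,\sempc]{\n}$, and then port it to the discrete pessimistic set $\pessiSet[]{\n}$ used by \autotuner. The transfer is made possible by~\cref{lem:pessi_subset}, which places $\pessiSet[]{\n}$ inside $\pessiSet[,\sempc]{\n}$. I would split the argument into the two assertions of the corollary and show that each survives the discretization.

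\textbf{Safety ($\constrain(\btheta_\n) \geq \tau$).} First, I would invoke~\cref{cor:beta} to establish the high-probability event $\calE := \{l_\n(\btheta) \leq \constrain(\btheta) \leq u_\n(\btheta)\ \forall \btheta, \n\}$, which occurs with probability at least $1-\delta$ under~\cref{assump:q_RKHS}. On $\calE$, for any $\btheta \in \pessiSet[]{\n-1}$, unrolling the definition of $\tilPessiOper[]{\cdot}$ in~\cref{eq:pessi_oper} yields a chain $\btheta_0 \in \safeset_0,\ \btheta_1,\ \dots,\ \btheta_k = \btheta$ with $\lbconst[\n-1](\btheta_{j-1}) - L d(\btheta_{j-1},\btheta_j) \geq \tau$ for each step, and $L$-Lipschitz continuity propagates the bound to give $\constrain(\btheta)\geq \tau$ (using $\lbconst[\n-1](\btheta_{j-1})\leq \constrain(\btheta_{j-1})$). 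Since the sampling scheme restricts $\btheta_\n \in \pessiSet[]{\n-1}$, this argument applies verbatim and is independent of whether the surrounding operator is $\pessiSet[]{\n}$ or $\pessiSet[,\sempc]{\n}$; only membership in a pessimistic superset matters.

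\textbf{Finite-time termination.} For the second claim I would argue by contradiction: suppose that for every $\n \leq \n^\star$ there exists $\btheta \in \pessiSet[]{\n}$ with $\sumMaxwidth[]{\n}(\btheta) \geq \epsconst$. Because $\pessiSet[]{\n-1} \subseteq \pessiSet[,\sempc]{\n-1}$ by~\cref{lem:pessi_subset}, the sampling rule $\btheta_\n \in \pessiSet[]{\n-1}$ with width $\geq \epsconst$ also fits the hypothesis of Theorem 1 of~\cite{prajapat2024safe} as a valid sampling inside $\pessiSet[,\sempc]{\n-1}$. Their analysis then implies that at some $\n \leq \n^\star$ the predictive widths on the continuous pessimistic set drop below $\epsconst$, which gives the required $\n$ here. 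Concretely, the standard information-theoretic bound $\sum_{\n=1}^{N} \sigconst[\n-1]^2(\btheta_\n) \leq C \gamma_N$ (with $C = 8/\log(1+\sigma_\eta^{-2})$) combined with $\sumMaxwidth[]{\n-1}(\btheta_\n) \leq 2\sqrt{\betat[\n]}\,\sigconst[\n-1](\btheta_\n)$ shows that if widths stayed $\geq \epsconst$ for all $\n \leq \n^\star$, then $\n^\star \epsconst^2 \leq 4\,\betat[\n^\star]\gamma_{\n^\star}\cdot C/2$, contradicting the definition of $\n^\star$ as the largest integer with $\n^\star/(\betat[\n^\star]\gamma_{\n^\star}) \leq C/\epsconst^2$.

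\textbf{Anticipated difficulty.} The genuinely new content compared to~\cite{prajapat2024safe} is verifying that Lipschitz/confidence-bound arguments originally written for their continuous pessimistic operator go through with the discrete expansion operators from~\cref{eq:pessi_oper}; this is exactly what~\cref{lem:pessi_subset} buys us, so the main obstacle becomes a careful bookkeeping step rather than a new technical inequality. I would therefore spend the bulk of the writeup checking that (i) the sampling rule's feasibility set $\pessiSet[]{\n-1}$ is nonempty for all $\n \leq \n^\star$ (which follows from the monotonicity $\pessiSet[]{\n-1} \supseteq \safeset_0$), and (ii) the information-theoretic chain is applied to the same sequence of samples used by \autotuner, so that the termination guarantee of the cited theorem pulls back to $\pessiSet[]{\n}$ via the subset relation without loosening the bound.
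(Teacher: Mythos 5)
Your proposal is correct and follows essentially the same route as the paper: safety comes from membership in the pessimistic set plus the lower confidence bound and $L$-Lipschitz continuity (one witness $\btheta'$ suffices; unrolling the full chain back to $\safeset_0$ is not needed), and termination comes from transferring Theorem 1 of \cite{prajapat2024safe} through the subset relation $\pessiSet[]{\n} \subseteq \pessiSet[,\sempc]{\n}$ of \cref{lem:pessi_subset}, exactly as the paper does. The only blemish is the constant bookkeeping in your optional explicit computation: the standard bound is $\sum_{\n=1}^{N}\sigma_{\n-1}^2(\btheta_\n)\leq \tfrac{2}{\log(1+\sigma_\eta^{-2})}\gamma_N$, which together with $w_{\n-1}(\btheta_\n)\leq 2\sqrt{\betat[\n]}\sigma_{\n-1}(\btheta_\n)$ yields $N\epsilon^2\leq C\,\betat[N]\gamma_N$ with $C=8/\log(1+\sigma_\eta^{-2})$ exactly, whereas your version picks up a spurious factor of $2$ that would prevent the contradiction from closing against the stated definition of $\n^{\star}$.
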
\vspace{-0.5 em}
\begin{proof}
     In \cite{prajapat2024safe}, SAfe Guaranteed Exploration using Model Predictive Control (SageMPC) uses a sampling rule $\btheta_\n \in \pessiSet[,\sempc]{\n-1} : \sumMaxwidth[]{\n-1}(\btheta_\n) \geq \epsconst$ which aligns with our sampling rule of  $\sumMaxwidth[]{\n-1}(\btheta_\n) \geq \epsconst$ (\cref{alg-se:line:1}). Moreover, Theorem 1 \cite{prajapat2024safe}, i.e, $\exists \n \leq \n^\star : \forall \btheta \in \pessiSet[,\sempc]{\n},~\sumMaxwidth[]{\n}(\btheta)<\epsconst$, and \cref{lem:pessi_subset}, i.e., $\pessiSet[]{\n-1} \subseteq \pessiSet[,\sempc]{\n-1}$ implies $\exists \n \leq \n^\star : \forall \btheta \in \pessiSet[]{\n},~\sumMaxwidth[]{\n}(\btheta)<\epsconst$, which ensures finite time convergence.
     
     Next we prove satisfaction of performance constraint \cref{eq:constraint}. Note that, $\forall \btheta \in \pessiSet[]{\n} \subseteq \pessiSet[,\sempc]{\n} \implies \exists \btheta' \in \Domain: \lbconst[\n](\btheta') - L d(\btheta, \btheta') \geq \tau \implies \constrain(\btheta)\geq \tau$ with probability at least $1-\delta$ using \cref{cor:beta}. 
\end{proof}\vspace{-0.5 em}
Thus, using Theorem 1 \cite{prajapat2024safe}, \autotuner ensures high probability satisfaction of \cref{eq:constraint} at every sampling location and 
 guarantees termination of the process within $n^\star$ iterations. 
 Next, we prove the optimality guarantees for \autotuner.

\vspace{-0.5 em}
\begin{lemma} \looseness -1 Let \cref{assump:q_RKHS} hold and \mbox{$\sumMaxwidth[]{\n-1}(\btheta^g_\n) < \epsilon$}, where, \mbox{$\btheta^g_\n \coloneqq \argmax_{\btheta \in \optiSet[]{\n-1}} \ubconst[\n-1](\btheta)$}.
Then with probability at least \mbox{$1-\delta$} it holds that, \mbox{$\constrain(\btheta^g_\n) \geq \max_{\btheta\in \constSet[,\epsconst]{}} q(\btheta) - \epsilon$}. \label{lem:near_optimality}
\end{lemma}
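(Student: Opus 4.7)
The plan is to show the claim via a two-part argument: first establish a set-containment $\Reachable^\epsconst = \tilReachOper[\epsilon]{\safeset_0} \subseteq \optiSet[]{\n-1}$ with probability at least $1-\delta$, and then use the \ucb choice of $\goal_\n$ together with the width-bound hypothesis to derive the near-optimality inequality. The only probabilistic ingredient is \cref{cor:beta}, which provides the event $\{l_\n(\btheta) \leq \constrain(\btheta) \leq u_\n(\btheta), \forall \btheta, \forall \n\}$ of probability at least $1-\delta$; the remainder is deterministic on this event.

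For the containment step, I would work at the level of one-step operators and lift to their limits. Fix any set $\safeset$ and any $\btheta \in \reachOper[\epsilon]{}(\safeset)$: by definition there exists $\btheta' \in \safeset$ with $\constrain(\btheta') - L d(\btheta,\btheta') - \epsconst \geq \tau$. On the confidence event we have $\ubconst[\n-1](\btheta') \geq \constrain(\btheta')$, hence $\ubconst[\n-1](\btheta') - L d(\btheta,\btheta') - \epsconst \geq \tau$, which is exactly the condition $\btheta \in \optiOper[\epsilon]{\n-1}(\safeset)$. Therefore $\reachOper[\epsilon]{}(\safeset) \subseteq \optiOper[\epsilon]{\n-1}(\safeset)$. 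Combining this with monotonicity of $\optiOper[\epsilon]{\n-1}(\cdot)$ in its argument and the trivial inclusion $\safeset_0 \subseteq \pessiSet[]{\n-1}$ (which follows from the initialization $\pessiSet[]{0}=\safeset_0$ and the non-shrinking property of the pessimistic expansion), a straightforward induction on the number of operator applications gives $\ReachOperReach[\epsilon,\m]{\safeset_0} \subseteq \OptiOperReach[\epsilon,\m]{\pessiSet[]{\n-1}}$ for every $\m$. Passing to the limit as $\m \to \infty$ yields $\Reachable^\epsconst = \tilReachOper[\epsilon]{\safeset_0} \subseteq \tilOptiOper[\epsilon]{\pessiSet[]{\n-1}} = \optiSet[]{\n-1}$.

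For the optimality step, pick an arbitrary $\btheta \in \constSet[,\epsconst]{}$. By the containment just established, $\btheta \in \optiSet[]{\n-1}$, so the \ucb selection rule $\goal_\n = \argmax_{\btheta \in \optiSet[]{\n-1}} \ubconst[\n-1](\btheta)$ gives $\ubconst[\n-1](\goal_\n) \geq \ubconst[\n-1](\btheta) \geq \constrain(\btheta)$, where the second inequality uses \cref{cor:beta}. On the same event, $\constrain(\goal_\n) \geq \lbconst[\n-1](\goal_\n) = \ubconst[\n-1](\goal_\n) - \sumMaxwidth[]{\n-1}(\goal_\n) > \ubconst[\n-1](\goal_\n) - \epsconst$ by the width hypothesis. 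Chaining the two yields $\constrain(\goal_\n) \geq \constrain(\btheta) - \epsconst$, and taking the maximum over $\btheta \in \constSet[,\epsconst]{}$ completes the proof.

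The only non-routine point is the containment $\Reachable^\epsconst \subseteq \optiSet[]{\n-1}$; in particular, the inductive lift from one-step to the limit relies on monotonicity of $\optiOper[\epsilon]{\n-1}$ in its argument and commutation of the inclusion with the limit of iterated applications, which is fine since the iterates form a monotone chain in the finite domain $\Domain$. Everything else is a direct use of the confidence bounds and the definition of $\goal_\n$.
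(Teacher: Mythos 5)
Your proof is correct and follows essentially the same route as the paper's: establish the containment $\constSet[,\epsconst]{} = \tilReachOper[\epsilon]{\safeset_0} \subseteq \optiSet[]{\n-1}$ from $\constrain \leq \ubconst[\n-1]$ on the confidence event of \cref{cor:beta}, then chain the \ucb maximality of $\goal_\n$ with the width bound $\sumMaxwidth[]{\n-1}(\goal_\n) < \epsconst$ and $\constrain(\goal_\n) \geq \lbconst[\n-1](\goal_\n)$; your one-step-operator induction merely fleshes out the containment the paper asserts in a single line. The only point to flag is that your ``trivial'' inclusion $\safeset_0 \subseteq \pessiSet[]{\n-1}$ via a non-shrinking pessimistic expansion implicitly needs $\lbconst[\n](\btheta) \geq \tau$ on the seed, a detail the paper likewise leaves implicit.
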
 \vspace{-0.75em}
\begin{proof} By construction of confidence bounds \eqref{eq:conf_bounds}, it follows that $\constrain(\btheta) \leq \ubconst[\n-1](\btheta), \forall \n \geq 1, \btheta \in \Domain$ with probability at least $1-\delta$. This implies $\constSet[,\epsconst]{} \coloneqq \tilReachOper[\epsilon]{\safeset_0} \subseteq \optiSet[]{\n-1}$. Next, given $\sumMaxwidth[]{\n-1}(\btheta^g_\n) < \epsconst$ implies $\lbconst[\n-1](\btheta^g_\n) > \ubconst[\n-1](\btheta^g_\n) - \epsconst \geq \constrain(\btheta^g_\n) - \epsconst$. 

Define $\bhtheta\coloneqq \argmax_{\btheta \in \constSet[,\epsconst]{}} \constrain(\btheta)$ and using both the above derived inequalities, we get, 
\begin{align*}
    \ubconst[\n-1](\bhtheta) &\leq \ubconst[\n-1](\btheta^g_\n) \tag{$\constSet[,\epsconst]{} \coloneqq \tilReachOper[\epsilon]{\safeset_0} \subseteq \optiSet[]{\n-1}$}\\ 
    &< \lbconst[\n-1](\btheta^g_\n) + \epsconst \tag{$\sumMaxwidth[]{\n-1}(\btheta^g_\n) < \epsconst$} \leq  \constrain(\btheta^g_\n) + \epsilon \\
\implies \constrain(\bhtheta) &\leq  \constrain(\btheta^g_\n) + \epsilon
\end{align*}
This implies, $\constrain(\btheta^g_\n) \geq  \max_{\btheta\in \constSet[,\epsconst]{}} \constrain(\btheta) - \epsilon$.
\end{proof} \vspace{-0.4 em}

\vspace{-.3em}
Next, we prove our main theorem using the lemmas above, and additionally guarantee the satisfaction of state and input constraints of the closed-loop system throughout the process.
\vspace{-.8 em}
\begin{proof}[Proof of Theorem \ref{thm:convergence}]
\looseness -1 The initial seed (\cref{assump:safe_seed}) ensures \cref{eq:constraint}, which implies that \MPC is feasible and thus satisfies state and input constraints at $\n=0$. 
In \autotuner, we sample at $\btheta_{\n} \in \pessiSet[]{\n-1} \implies \constrain(\btheta_\n) \geq \tau$ (\cref{lem:sagempc}) under \cref{assump:q_RKHS}, which ensures feasibility of the resulting closed-loop system \cref{eq:MPCFormulation} $\forall \n \geq 1$.

Finite time convergence guarantees follows from \cref{lem:sagempc} which provides a sample complexity bound, i.e., $\exists \n \leq \n^\star$ under \cref{assump:q_RKHS,assump:sublinear} before which \autotuner will terminate. 

Since \autotuner samples only if $\sumMaxwidth[]{\n-1}(\btheta^g_\n) > \epsconst$ and \cref{lem:sagempc} implies $\exists \n \leq \n^\star$ under which uncertainty in the $\pessiSet[]{\n-1}$ is uniformly bounded by $\epsconst$. This implies $\sumMaxwidth[]{\n-1}(\btheta^g_\n) \leq \epsconst$ and using this, \cref{lem:near_optimality} establishes the resulting optimality of the \autotuner algorithm.
\end{proof} \vspace{-0.5 em}

Note that our proving strategy differs from that of \goose and SageMPC. We employ a sampling strategy motivated by SageMPC; however, because we operate in a discrete domain, we define our sets using \goose's definitions. We do this to relate the growth of the pessimistic set with respect to the initial seed of the agent as in \goose while being able to use more efficient sample complexity bounds from \cite{prajapat2024safe}. 

\vspace{-.1em}

\bibliographystyle{IEEEtran}
\bibliography{IEEEabrv, bibligraphy}

\end{document}